\documentclass[12pt,a4paper]{amsart}
\usepackage{graphicx}
\usepackage{subfigure}
\usepackage{amsmath,amssymb,amsfonts}

%=================================================================
\newcounter{thm}
\setcounter{thm}{0}
\newcounter{ex}
\setcounter{ex}{0}

\newtheorem{Proposition}[thm]{Proposition}
\newtheorem{Example}[ex]{Example}

\newcommand{\Id}{id}

\DeclareMathOperator{\Diff}{Diff}
\DeclareMathOperator{\SO}{SO}
\DeclareMathOperator{\Emb}{Emb}
\DeclareMathOperator{\Gr}{Gr}
\DeclareMathOperator{\GL}{GL}

\def \Jet {\mathcal{J}}
\newcommand{\R}{\ensuremath{\mathbb{R}}}

%=================================================================
\title{Symmetry in Image Registration and Deformation Modeling}
\author{Stefan Sommer and Henry O. Jacobs}

\begin{document}
\maketitle

\begin{abstract}
  We survey the role of symmetry in diffeomorphic 
  registration of landmarks, curves, surfaces, images and higher-order data. 
  The infinite dimensional problem of finding correspondences
  between objects can for a range of concrete data types be reduced resulting in 
  compact representations of shape and spatial structure.
  This reduction is possible because the available data
  is incomplete in encoding the full deformation model. Using reduction by
  symmetry, we describe the reduced models in a common theoretical framework 
  that draws on links between the registration
  problem and geometric mechanics. Symmetry also arises in reduction to the Lie 
  algebra using particle relabeling symmetry allowing the equations of motion to be
  written purely in terms of Eulerian velocity field. Reduction by
  symmetry has recently been applied for
  jet-matching and higher-order discrete approximations of the image
  matching problem. We outline these constructions and 
  further cases where reduction by
  symmetry promises new approaches to registration of complex data types.
\end{abstract}

% Keywords: add 3 to 10 keywords
% \keyword{image registration, reduction by symmetry, LDDMM, isotropy subgroups, jet matching}

\section{Introduction}
Registration, the task of establishing correspondences between multiple
instances of objects such as images, landmarks, curves, and surfaces, plays a
fundamental role in a range of computer vision applications including shape
modeling \cite{younes_shapes_2010}, motion
compensation and optical flow \cite{brox_high_2004}, remote sension
\cite{dawn_remote_2010}, and medical imaging \cite{sotiras_deformable_2013}. In
the subfield of computational anatomy \cite{younes_evolutions_2009}, establishing inter-subject correspondences 
between organs allows the statistical study of organ shape and shape variability. 
Examples of the fundamental role of registration include quantifying developing Alzheimer's disease 
by establishing correspondences between brain tissue at different
stages of the disease \cite{boyes_cerebral_2006}; measuring the effect of COPD on lung tissue after
removing the variability caused by the respiratory process
\cite{gorbunova_early_2010}; and correlating the 
shape of the hippocampus to schizophrenia after inter-subject registration
\cite{joshi_geometry_1997}.

In this paper, we survey the role of symmetry in diffeomorphic
registration and deformation modeling and link symmetry as seen from the
field of geometric mechanics with the image registration problem. We focus on large deformations
modeled in subgroups of the group of diffeomorphic mappings on the spatial
domain, the approach contained in the Large Deformation Diffeomorphic Metric
Mapping (LDDMM,
\cite{dupuis_variational_1998,trouve_infinite_1995,christensen_deformable_2002,younes_shapes_2010}) 
framework. Connections with geometric
mechanics \cite{holm_soliton_2004} have highlighted the 
role of symmetry and resulted in previously known
properties connected with the registration of specific data types being
described in a common theoretical framework \cite{jacobs_symmetries_2013}. We wish to describe these connections
in a form that highlights the role of symmetry and points towards future
applications of the ideas. It is the aim that the paper will make the role of symmetry in
registration and deformation modeling clear to the reader that has no
previous familiarity with symmetry in geometric mechanics and symmetry groups in
mathematics.

\subsection{Symmetry and Information}
One of the main reasons symmetry is useful in numerics
is in it's ability to reduce how much information one must carry.
As a toy example, consider the a top spinning in space.
Upon choosing some reference configuraiton,
the orientation of the top is given by a rotation matrix,
i.e. an element $R \in \SO(3)$.
If I ask for you to give me the direction of the pointy tip of the top,
(which is pointing opposite $\mathbf{k}$ in the reference)
it suffices to give me $R$.
However, $R$ is contained in space of dimension $3$,
while the space of possible directions is the $2$-sphere, $S^2$,
which is only of dimension $2$.
Therefore, providing the full matrix $R$ is excessive
in terms of data.
It suffices to just provide the vector $R \cdot \mathbf{k} \in S^2$.
Note that if $\tilde{R} \cdot \mathbf{k} = \mathbf{k}$,
then $R \cdot \mathbf{k} = R\cdot \tilde{R} \cdot \mathbf{k}$.
Therefore, given only the direction $\mathbf{k}' = R \cdot \mathbf{k}$,
we can only reconstruct $R$ up to an element $\tilde{R}$ which
preserves $\mathbf{k}$.
The group of element which preserve $\mathbf{k}$ is identifiable
with $\SO(2)$.
This insight allows us to express the space of directions $S^2$
as a homogenous space $S^2 \equiv \SO(3) / \SO(2)$.
In terms of infomation we can cartoonishly express this by the 
expression
\begin{align*}
  \text{``orientation''} = 
  \text{``direction of tip''} + 
  \text{``orientation around the tip''}
\end{align*}

This example is typically of all group quotients.  If $X$ is some universe
of objects and $G$ is a group which acts freely upon $X$, then 
the orbit space $X / G$ hueristically contains the data of $X$
minus the data which $G$ transforms.  Thus
\begin{align*}
  \text{data}(X) = \text{data}(X/G) + \text{data}(G).
\end{align*}
Reduction by symmetry can be implemented when a problem posed on $X$
has $G$ symmetry, and can be rewritten as a problem posed on $X/G$.
The later space containing less data, and is therefore more efficient
in terms of memory.

\subsection{Symmetry in Registration}
Registration of objects contained in a spatial domain, e.g. the
volume to be imaged by a scanner, can be formulated as the search for a
deformation that transforms both domain and objects to establish an inter-object match.
The data available when solving a registration problem generally is incomplete
for encoding the deformation of every point of the domain. This is for example the
case when images to be matched have areas of constant intensity and no
derivative information can guide the registration. Similarly, when 3D shapes 
are matched based on similarity of their surfaces, the deformation of the interior 
cannot be derived from the available information.
The deformation model is in these cases
over-complete, and a range of deformations can
provide equally good matches for the data. Here arises \emph{symmetry}: the
subspaces of deformations for which the registration problem is symmetric with
respect to the available information. When quotienting out symmetry
subgroups, a vastly more compact representation is obtained. In the image case,
only displacement orthogonal to the level lines of the image is needed; in the
shape case, the information left in the quotient is supported on the surface of
the shape only.
\begin{figure}[t]
  \begin{center}
    \subfigure[fixed image]{
      \includegraphics[width=.25\columnwidth,trim=85 55 155 55,clip]{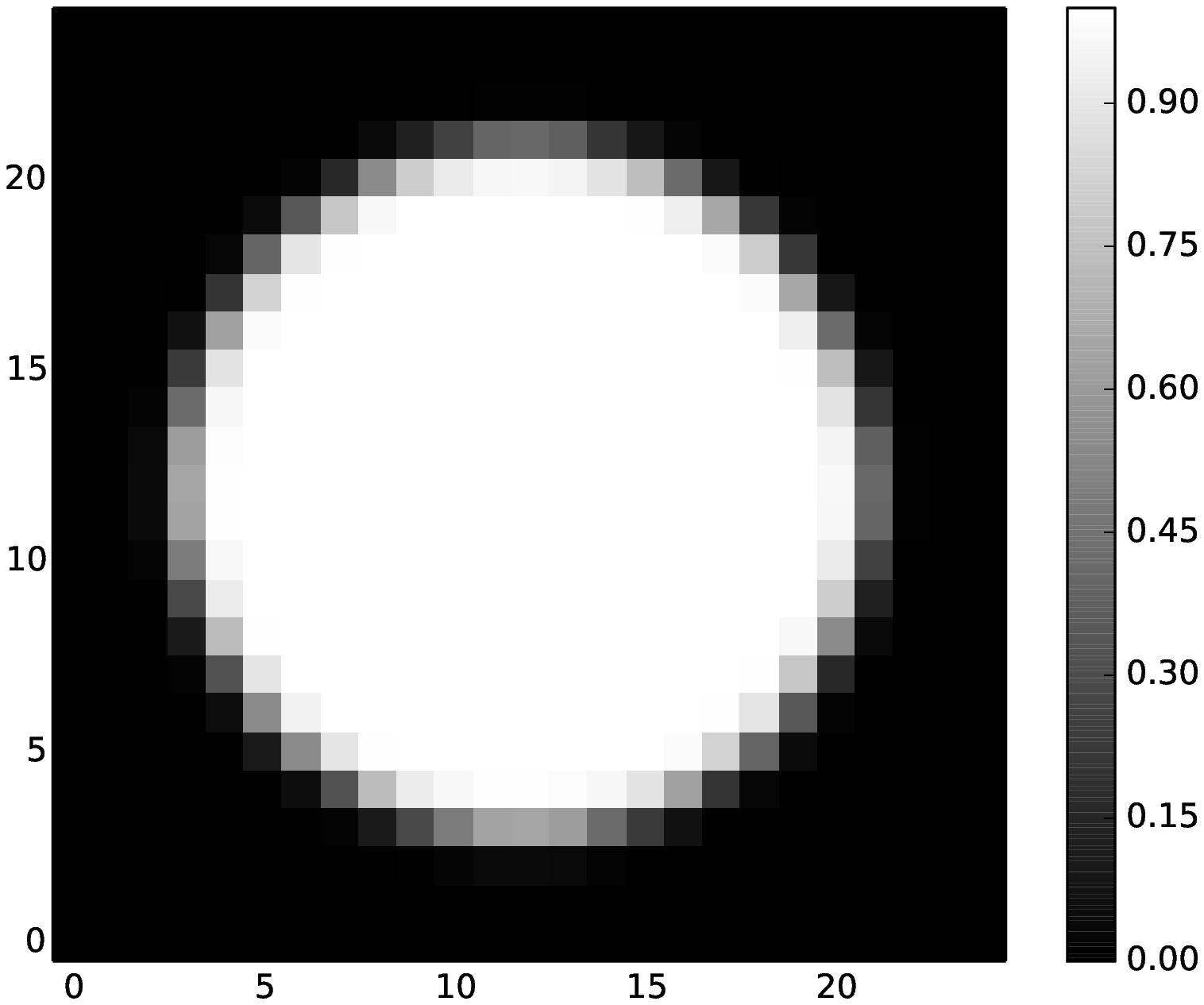}
    }
    \subfigure[moving image]{
      \includegraphics[width=.25\columnwidth,trim=85 55 155 55,clip]{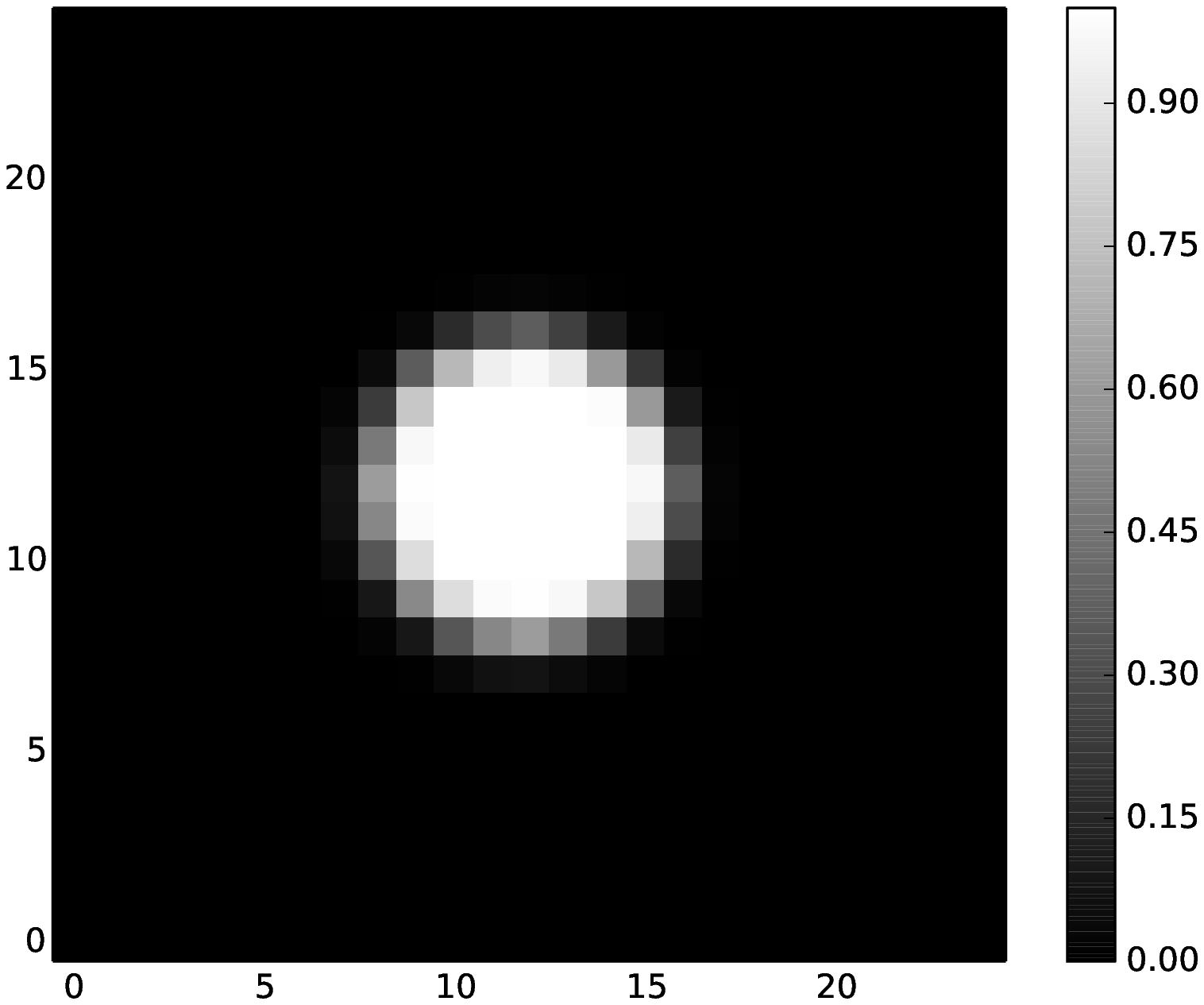}
    }
    \subfigure[warp]{
      \includegraphics[width=.25\columnwidth,trim=130 56 115 50,clip]{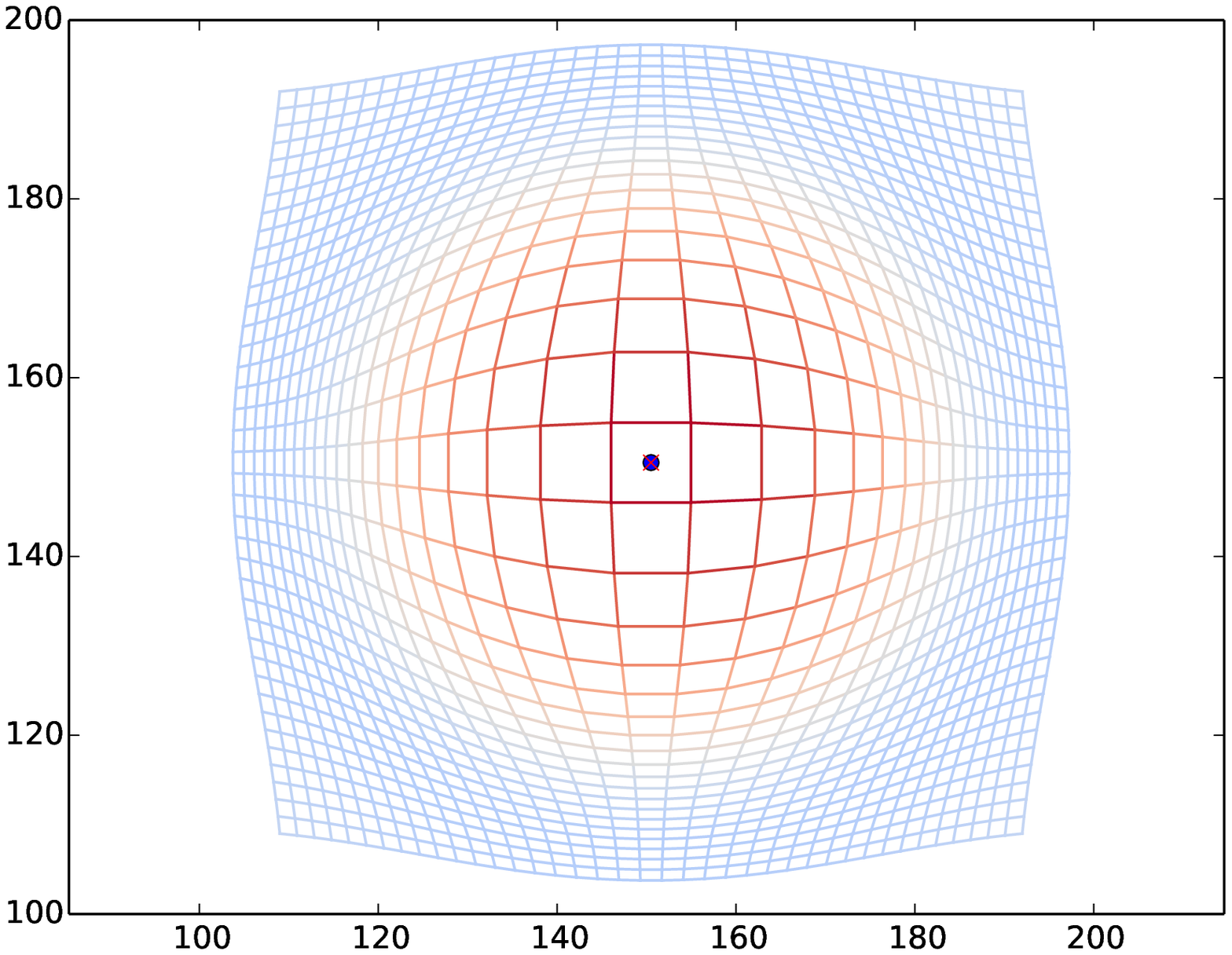}
    }
  \end{center}
  \caption{A registration of two discs of different sizes (a,b) with a
    warp that brings (b) into correspondence with (a) visualized by its effect on
    an initially regular grid (c). Using symmetry,
    %when measuring the image difference at the center of the image, 
    the dimensionality of the registration problem can be reduced from infinite to finite. In this case, 6
    parameters of a 1-jet particle in the center of the moving image encode the entire deformation.
    }

  \label{fig:intro}
\end{figure}

\subsection{Content and Outline}
We start with background on the registration problem and the large deformation
approach from a variational viewpoint.
Following this, we describe how reduction by symmetry leads to an Eulerian formulation
of the equations of motion when reducing to the Lie
algebra. Symmetry of the dissimilarity measure allows additional reductions,
and we use isotropy subgroups to reduce the complexity of the registration problem
further. Lastly, we survey the effect of symmetry in a range of concrete registration
problems and end the paper with concluding remarks.

\section{Registration and Variational Formulation}
The registration problem consists in finding correspondences between objects
that are typically point sets (landmarks), curves, surfaces, images or more
complicated spatially dependent data such as diffusion weighted images (DWI).
The problem can be approached by
letting $M$ be a spatial domain containing the objects to be registered. $M$ can
be a differentiable manifold or, as is often the case in applications,
the closure of an open subset of $\R^d$, $d=2,3$, e.g. the unit square. A map
$\varphi:M\rightarrow M$ can deform or warp the domain by mapping each
$x\in M$ to $\varphi(x)$.
\begin{figure}[t]
  \begin{center}
      \includegraphics[width=.50\columnwidth,trim=800 0 00 0,clip]{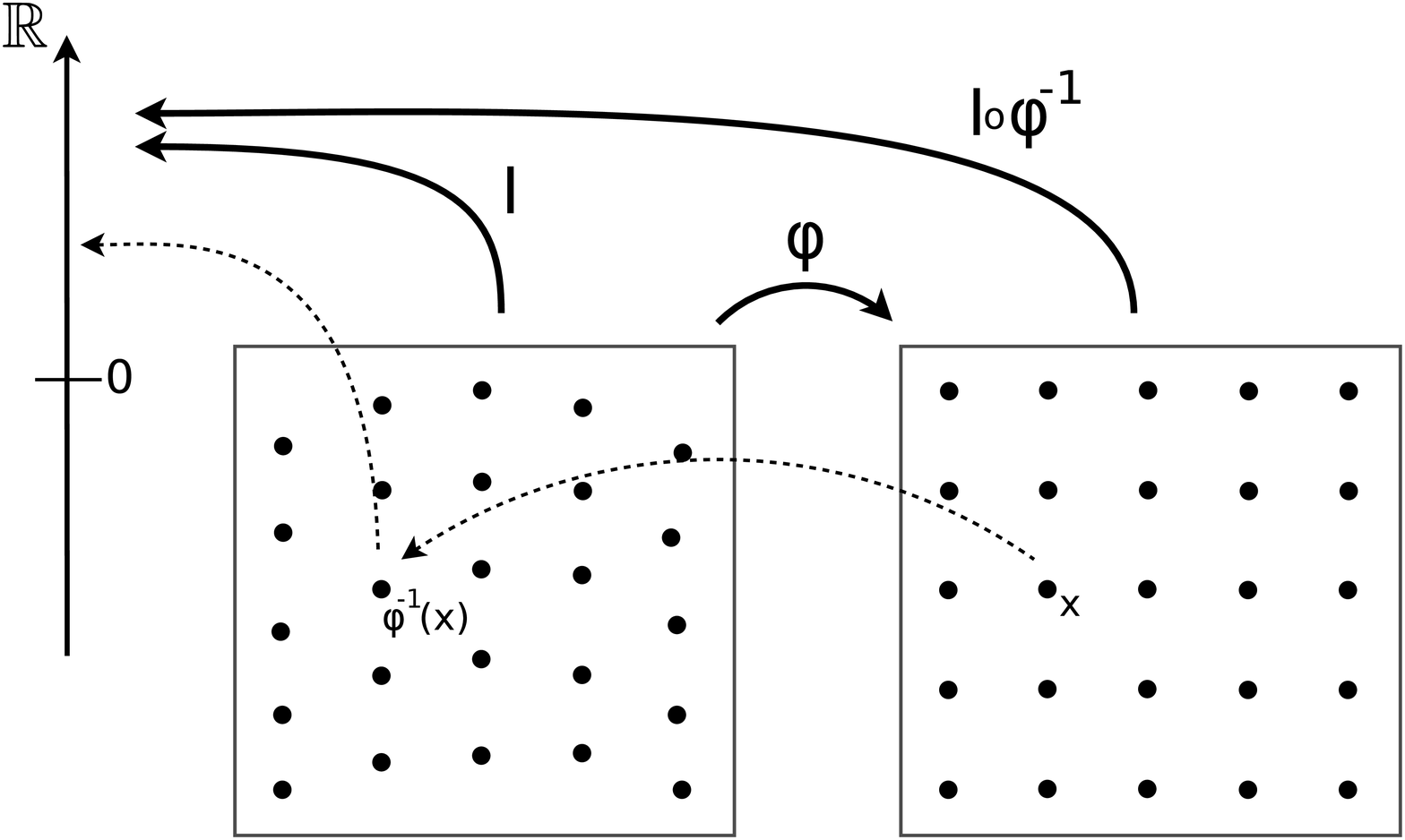}
  \end{center}
  \caption{A warp $\varphi\in\Diff(M)$ acts on an image $I:M\rightarrow\R$ by
    composition with the inverse warp,
    $\varphi\cdot I=I\circ\varphi^{-1}$. Given two images $I_0,I_1:M\rightarrow\R$, image
    registration involves finding a warp $\varphi$ such that $\varphi\cdot I_0$ is close to
    $I_1$ as measured by a dissimilarity measure $F(\varphi\cdot I_0,I_1)$.
    }

  \label{fig:registration}
\end{figure}

The deformation encoded in the warp will apply to the objects in $M$ as well 
as the domain itself. For example,
if the objects to be registered consist of points sets $\{x_1,\ldots,x_N\}$,
$x_i\in M$, the set will be mapped to
$\{\varphi(x_1),\ldots,\varphi(x_N)\}$. For surfaces $S\subset M$, $\varphi$
similarly results in the warped surface $\varphi(S)$. Because those operations
are associative, the mapping $\varphi$
acts on $\{x_i\}$ or $S$ and we write $\varphi\cdot\{x_i\}$ and $\varphi\cdot S$ for the warped
objects. An image is a function $I:M\rightarrow\R$, and $\varphi$ acts on $I$
as well, in this case by composition with its inverse $\varphi\cdot
I=I\circ\varphi^{-1}$,
see Figure~\ref{fig:registration}. For this
$\varphi$ must be is invertible, and commonly we restrict to the set of
invertible and differentiable mappings $\Diff(M)$. For various other types of data objects, 
the action of a warp on the objects can be defined in a way similar to the case for point
sets, surfaces and images. This fact relates a range registration problems to
the common case of finding appropriate warps $\varphi$ that trough the action
brings the objects into correspondence. Trough the action, different instances of 
a shape can be realized by letting warps act on a base instance of the shape, and
a class of shape models can therefor be obtained by using deformations to
represent shapes \cite{younes_shapes_2010}.

\subsection{Variations over Warps and Families of Warps}
The search for appropriate warps can be formulated in a variational formulation
with an energy 
\begin{equation}
  E(\varphi)
  =
  R(\varphi)
  +
  F(\varphi)
  \label{eq:var}
\end{equation}
where $F$ is a dissimilarity measure of the difference between the deformed
objects, and
$R$ is a regularization term that penalizes unwanted properties of $\varphi$ such as
irregularity. If two objects $o_1$ and $o_2$ is to be matched, $F$ can take the
form $F(\varphi\cdot o_0,o_1)$ using the action of $\varphi$ on $o_0$; for image matching,
an often used dissimilarity measure is the
$L^2$-difference or sum of square differences (SSD) that has the form
$F(\varphi\cdot I_0,I_1)=\int_M|I_0\circ\varphi^{-1}(x)-I_1(x)|^2dx$.

The regularization term can take various forms often modeling physical properties
such as elasticity \cite{pennec_riemannian_2005} and penalizing derivatives of
$\varphi$ in order to make it
smooth. The free-form-deformation (FFD, \cite{rueckert_nonrigid_1999}) and related approaches penalize
$\varphi$ directly. For some choices of $R$, 
existence and analytical properties of minimizers of \eqref{eq:var} have been
derived \cite{derfoul_relaxed_2014}, however it is
in general difficult to ensure solutions are diffeomorphic by penalizing
$\varphi$ in itself.
Instead, flow based approaches model one-parameter families or paths of mappings
$\varphi_t$, $t\in[0,1]$ where $\varphi_0$ is the identity mapping $\Id\in\Diff(M)$ and the
dissimilarity is measured at the endpoint $\varphi_1$. The time evolution of
$\varphi_t$ can be described by the differential equation
$\frac{d}{dt}\varphi_t(x)=v_t(\varphi(x))$ with the flow field $v_t$ being 
a vector field on $M$. The space of such fields is denoted
$V$. In the Large Deformation Diffeomorphic Metric Mapping (LDDMM,
\cite{younes_shapes_2010})
framework, the regularization is applied to the flow field
$v_t$ and integrated over time giving the energy
\begin{equation}
  E(\varphi_t)
  =
  \int_0^1\|v_t\|_V^2dt
  +
  F(\varphi_1)
  \label{eq:varlddmm}
  \ .
\end{equation}
If the norm $\|\cdot\|_V$ that measures the irregularity of $v_t$ is
sufficiently strong, $\varphi_t$ will be a diffeomorphism for all $t$. This
approach thus gives a direct way of enforcing properties of
the generated warp: Instead of regularizing $\varphi$ directly, the analysis is
lifted to a normed space $V$ that is much easier control. 
The energy \eqref{eq:varlddmm} has the same minimizers as the geometric
formulation of LDDMM used in the next section.

Direct approaches to solving the optimization problem \eqref{eq:varlddmm} must handle
the fact that the problem of finding a warp is now transfered to finding a time-dependent 
family of warps implying a huge increase in dimensionality. This problem is
therefore vary hard to represent numerically and to optimize. For several data
types, it has been shown how optimal paths for \eqref{eq:varlddmm} have specific
properties that reduces the dimensionality of the problem and therefor makes
practical solutions feasible. In the next section, we describe the
geometric framework and the reduction theory that allows the data dependent
results to be formulated as specific examples of reduction by symmetry. We
survey these examples in the following section.

\section{Reduction by symmetry in LDDMM}
\label{sec:red}
We here describe a geometric formulation of the registration problem \cite{younes_shapes_2010}
and how symmetry can be used to reduce the optimization over
time-dependent paths of warps to vector fields in the Lie algebra resulting
in an Eulerian version of the equations of motion. Secondly, we describe how symmetry of the dissimilarity
measure allows further reduction to lower dimensional quotients.

\subsection{Kinematics}
We here introduce a number of notions from differential
geometry in a fairly informal manner.
For formal definitions we refer to \cite{FOM}.
While it is neccessary for the purpose of rigour to learn formal
definitions, independent of cartoonish sketches, when learning differential geometry,
one can still get quite far with cartoonish sketches.
For example, by picturing a manifold, $M$,
as a surface embedded in $\R^3$.
This is the approach we will take.

The tangent bundle of $M$, denoted $TM$, is the set of pairs $(x,v)$
where $x \in M$ and $v$ is a vector tangential to $M$
at the point $x$ (see Figure~\ref{fig:tm}).
A vector-field is a map $u:M \to TM$ such that $u(x) \in TM$
is a vector above $x$ for all $x \in M$.
In summary, a vector on $M$ is an ``admissible velocity''
on $M$, and $TM$ is the set of all possible admissible velocities.
\begin{figure}[t]
  \begin{center}
    \includegraphics[width=.30\columnwidth,trim=0 50 0 0,clip]{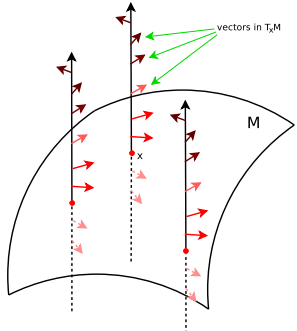}
  \end{center}
  \caption{The tangent bundle $TM$ of the manifold $M$ consists of pairs $(x,v)$ of
  points $x\in M$ and tangent vectors $v\in T_xM$. It's a fiber bundle over $M$ with
    fibers $T_xM$ for each $x\in $M.}
  \label{fig:tm}
\end{figure}

Given a vector-field $u$ we may consider the initial value problem
\begin{align*}
  \begin{cases}
    x(0) = x_0 \\
    \frac{dx}{dt} = u( x(t))
  \end{cases}
\end{align*}
for $t \in [0,1]$.
Given an intial condition $x_0$, the point $x_1 = x(1)$ given by 
solving this initial value problem is uniquely determined (if it exists).
Under reasonable conditions $x_1$ exists for each $x_0$,
and there is a map $\Phi_t^{u}: x_0 \in M \mapsto x_1 \in M$ which we call the flow
of $u$.
If $u$ is time-dependent we can consider the initial value problem with 
$\frac{dx}{dt} =  u(t,x(t))$.  Under certain conditions, this will also yield a flow map, $\Phi^u_{t_0,t_1}$
which is the flow from time $t=t_0$ to $t=t_1$.
If $u$ is smooth, the flow map is smooth as well, in particular a
diffeophism.  We denote the set of diffeomorphisms by $\Diff(M)$.

Conversely, let $\varphi_t \in \Diff(M)$
be a time-dependent diffeomorphism.
Thus, for any $x \in M$,
we observe that $\varphi_t(x)$
is a curve in $M$.
If this curve is differentiable
we may consider its time-derivative,
$\frac{d \varphi_t}{dt}(x)$,
which is a vector above the point $\varphi_t(x)$.
From these observations it imediately follows that
$\frac{d \varphi_t}{dt} (\varphi_t^{-1}(x))$
is a vector above $x$.
Therefore the map $u(t) : M \to TM$, given by
$u(t) := \left(\frac{d}{dt} \varphi_t \right) \circ \varphi_t^{-1}$
is a vector-field which we call the \emph{Eulerian velocity field of $\varphi_t$}.

The Eulerian velocity field contains less data than
$\frac{d\varphi_t}{dt}$, and this reduction in data
can be viewed from the perspective of symmetry.
Given any $\psi \in \Diff(M)$,
the curve $\varphi_t$ can be transformed to the curve
$\varphi_t \circ \psi$.
We observe that
\begin{align*}
  u(t) &:= \frac{d\varphi}{dt} \circ \varphi^{-1} = \frac{d \varphi_t}{dt} \circ \psi \circ \psi^{-1} \circ \varphi^{-1} \\
  &= \left( \frac{d}{dt}( \varphi_t \circ \psi) \right) \circ (\varphi\circ \psi)^{-1}.
\end{align*}
Thus $\varphi_t$ and $\varphi_t \circ \psi$ both have the same Eulerian velocity fields.
In other words, the Eulerian velocity field, $u(t)$, is invariant under particle relablings.
Schematically, the following holds
\begin{align*}
  \text{data} \left( \frac{d\varphi}{dt} \right) = \text{data}(u(t)) + \text{data}(\varphi_t).
\end{align*}

Finally, we will denote some linear operators on the space of vector-fields.
Let $\Phi \in \Diff(M)$ and let $u \in \mathfrak{X}(M)$.
The \emph{push-forward} of $u$ by $\Phi$, denoted $\Phi_* u$,
is the vector-field given by
\begin{align*}
  [\Phi_* u](x) = \left. D\Phi \right|_{\Phi^{-1}(x)} \cdot u( \Phi^{-1}(x)).
\end{align*}
By inspection we see that $\Phi_*$ is a linear operator on the vector-space of vector-fields.
One can view $\Phi_* u$ as  ``$u$ in a new coordinate system'' because any differential 
geometric property of $u$ is also inherited by $\Phi_* u$.
For example, if $u(x) = 0$ then $[\Phi_* u] ( y) = 0$ with $y = \Phi(x)$.
If $S$ is an invariant under $u$ then $\Phi(S)$ is invariant under $\Phi_*u$.

As $\Phi_*$ is linear, we can transpose it.
Let $\mathfrak{X}(M)^*$ denote the dual space to the space of vector-fields,
i.e. the set of linear maps $\mathfrak{X}(M)\rightarrow\R$,
and let $m \in \mathfrak{X}(M)^*$.
We define $\Phi_*m \in \mathfrak{X}(M)^*$ by the equality
\begin{align*}
  \langle \Phi_*m , \Phi_*v \rangle = \langle m , v \rangle
\end{align*}
for all $v \in \mathfrak{X}(M)$ where $\langle m , v \rangle$ denotes evaluation
of $m$ on $v$.

Finally, we define the Lie-derivative as the linear operator
$\pounds_w : \mathfrak{X}(M) \to \mathfrak{X}(M)$
defined by
\begin{align*}
  \pounds_w[ u ] = \left. \frac{d}{d \epsilon}\right|_{\epsilon=0} (\Phi^w_{\epsilon})_* u.
\end{align*}
As $\pounds_w$ is linear, we can take its transpose.
If $m \in \mathfrak{X}(M)^*$, then we can define $\pounds_u[m] \in \mathfrak{X}(M)^*$ by the equation
\begin{align*}
  \langle \pounds_u[m] , w \rangle + \langle m , \pounds_u[w] \rangle = 0
\end{align*}
for all $w \in\mathfrak{X}(M)$.
This is a satisfying because for a fixed $m$ and $w$ we observe
\begin{align}
 \langle \pounds_u[m] , w \rangle + \langle m,\pounds_u[w] \rangle = 0 = \frac{d}{dt} \langle m , w \rangle = \frac{d}{dt}\langle (\Phi_t^u)_* m , (\Phi_t^u)_* \rangle
\label{eq:product_rule}
\end{align}
This is nothing but a coordinate free version of the product rule.

\subsection{Reduction to Lie Algebra}
The variational formulation \eqref{eq:varlddmm} of LDDMM is equivalent to
minimizing the energy
\begin{align*}
  E = d( id , \varphi) + F( \varphi)
\end{align*}
where $d: \Diff(M) \times \Diff(M) \to \mathbb{R}$
is a distance metric on $\Diff(M)$,
$id$ is the identity diffeomorphism,
and $F:\Diff(M) \to \mathbb{R}$ is a function
which measures the disparity between the deformed template and the target image.

\begin{Example}
Given images $I_0, I_1 \in L^2(M)$, we consider the dissimilarity measure
\begin{align*}
  F(\varphi) = \| (I_0 \circ \varphi^{-1}) - I_1 \|_{L^2(M)}^2.
\end{align*}
\label{ex:imssd}
\end{Example}
In this article we will consider the distance metric
\begin{align*}
  d(\varphi_0,\varphi_1) = \inf_{ 
    \substack{
      v \in C^0([0,1] , \mathfrak{X}(M) ) \\
      \Phi^v_{0,1} \circ \varphi_0 = \varphi_1
      }
    } \left( \int_0^1 \| v(t) \| dt \right),
\end{align*}
where $\| \cdot \|$ is some norm on $\mathfrak{X}(M)$.
If $\| \cdot \|$ is induced by an inner-product, then this distance metric is
(formally) a Riemannian distance metric on $\Diff(M)$.
Note that the distance metric, $d$, is written in terms of a norm $\| \cdot \|$, defined on $\mathfrak{X}(M)$.
In fact, the norm on $\mathfrak{X}(M)$ induces a Riemannian metric on $\Diff(M)$ given by
\begin{align*}
	\| \frac{d\varphi_t}{dt} \|^2 := \| \frac{d\varphi_t}{dt} \circ \varphi_t^{-1} \|^2,
\end{align*}
and $d$ is the Reimannian distance with respect to this metric.
If the norm $\| \cdot \|$ imposes a Hilbert space structure on the vector-fields
it can be written in terms of a psuedo-differential operator $P: \mathfrak{X}(M) \to \mathfrak{X}(M)^*$ as $\| u \|^2 = \langle P[u] , u \rangle$ \cite{younes_shapes_2010}.

Given $P$, minimizers of $E$ must neccessarily satisfy
\begin{align}
  \begin{cases}
  m(t) = (\Phi^u_{t,1})_* m(1) = P[u(t)]\\
  \langle m(1)  , w \rangle = \left. \frac{d}{d\epsilon} \right|_{\epsilon=0} F( \Phi^w_\epsilon \circ \Phi^u_{0,1}) \quad,\quad \forall w \in \mathfrak{X}(M).
  \end{cases} \label{eq:extreme1}
\end{align}
This is a vector-calculus statement of Proposition 11.6 of \cite{younes_shapes_2010}.
That this equation of motion is even well-posed is nontrivial, since $P$ is merely an injective map,
and there is no guarantee that it can be inverted to obtain a vector-field $u$ to integrate into a diffeomorphism.
Fortunately, safety guards for well-posedness are studied in \cite{TrouveYounes2005}.
If the reproducing kernel of $P$ is $C^1$, then the equations of motion are well-posed for all time.

There is something unsatisfying about using \eqref{eq:extreme1} for the purpose of computation.
Doing any sort of computation on $\Diff(M)$ is difficult, as it is a nonlinear infinite dimensional space.
Moreover, the dissimilarity measure $F$ only comes into play at time $t=1$ and the distance function 
is an integral over the vector-space $\mathfrak{X}(M)$.  It would be nice if we could rewrite the extremizers
purely in terms of the Eulerian velocity field, $u$ and the flow at $t=1$.
In fact this is often the case.
Given $P$, the minimizer of $E$ must neccessarily satisfy the boundary value problem
\begin{align}
  \begin{cases}
  \partial_t m + \pounds_u[m] = 0, m = P[u] \quad \forall t \in [0,1] \\
  \frac{d}{d \epsilon}|_{\epsilon = 0} \left[ F( \Phi^w_\epsilon \circ \Phi^u_{0,1} ) \right] +  \langle P[u(1)] , w \rangle = 0 \quad, \forall w \in \mathfrak{X}(M).
  \end{cases} \label{eq:extreme2}
\end{align}
This is an alternative formulation of \eqref{eq:extreme1}, obtained
simply by taking a time-derivative.
In the language of fluid-dynamics, \eqref{eq:extreme2} is an Eulerian version of \eqref{eq:extreme1}.
The advantage of this formulation, is that the bulk of the computation occurs on the vector-space $\mathfrak{X}(M)$,
and this observation is the starting point for the algorithm given in \cite{beg_computing_2005}.

This reduction of the problem to the space of vector-fields is a first instance of reduction
by symmetry.  In particular, this corresponds to the fact that the space of vector-fields
$\mathfrak{X}(M)$, is identifiable as a quotient space
\begin{align*}
  \mathfrak{X}(M) \equiv T\Diff(M) / \Diff(M).
\end{align*}
And the map $(\varphi,\frac{d\varphi}{dt}) \in T\Diff(M) \mapsto \frac{d\varphi_t}{dt} \circ \varphi^{-1} \in \mathfrak{X}(M)$.
is the quotient projection.

\subsection{Isotropy Subgroups}
The reduction to dynamics on $\Diff(M)$ to dynamics on $\mathfrak{X}(M)$ occurs primarly 
because the distance function is $\Diff(M)$ invariant.
However, one can not completely abandon $\Diff(M)$ because the solution requires one to
compute the time $1$ flow, $\Phi^u_{0,1}$.
Fortunately, there is a second reduction which allows us to avoid computing $\Phi^u_{0,1}$ in
its entirety.
This second reduction corresponds to the invariance properties of the
dissimilarity measure $F$.
Let $G_F \subset \Diff(M)$ denote the set of diffeomorphisms which
leave $F$ invariant, i.e.:
\begin{align*}
  G_F := \{ \psi \in \Diff(M) \mid F( \varphi \circ \psi ) = F(\varphi),  \forall \varphi \in \Diff(M) \}.
\end{align*}
One can readily verify that $G_F$ is a subgroup of $\Diff(M)$,
and so we call $G_F$ the \emph{isotropy subgroup of $F$}.

Having defined $G_F$ we can now consider the homogenous space $Q = \Diff(M) / G_F$,
which is the \emph{quotient space} induced by the action of right composition of $G_F$
on $\Diff(M)$.  This quotient space is ``smaller'' in the sense of data.
In terms of maps, this can be seen by defining the map $\varphi \in \Diff(M) \mapsto q = [\varphi]_{/G_F} \in Q$, where $[\varphi]_{/G_F}$ denotes the equivalence class
of $\varphi$.  We call this mapping the \emph{quotient projection} because it sends $\Diff(M)$ to $Q$ surjectively.
While these notions are theoretically quite complicated, they often manifest more
simply in practice.

\begin{Example}\label{ex:two_particles}
  Let $M \subset \mathbb{R}^n$ be the closure of some open set.
  Let $x_1,x_2,y_1,y_2 \in M$ with $x_1 \neq x_2$ and consider the dissimilarity measure
  \begin{align*}
    F(\varphi) = \| \varphi(x_1) - y_1 \|^2 + \| \varphi(x_2) - y_2 \|^2.
  \end{align*}
  We see that
  \begin{align*}
    G_F \equiv \{ \psi \in \Diff(M) \mid \psi(x_1) = x_1, \psi(x_2) = x_2 \},
  \end{align*}
  and
  \begin{align*}
    Q = \Diff(M) / G_F \equiv \{ (z_1,z_2) \in M \times M \mid z_1 \neq z_2 \} = M \times M - \Delta_{M \times M},
  \end{align*}
  where $\Delta_{M \times M}$ denotes the diagonal of $M \times M$.
  The quotient projection is $\varphi \in \Diff(M) \mapsto ( \varphi(x_1) , \varphi(x_2)) \in M \times M - \Delta_{M \times M}$.
  Note that $\Diff(M)$ is infinite dimensional while $Q$ is of dimension $2 \dim(M)$.
  This is massive reduction.
\end{Example}
\begin{figure}[t]
  \begin{center}
    %\subfigure[fixed image]{
      \includegraphics[width=.25\columnwidth,trim=140 80 130 80,clip]{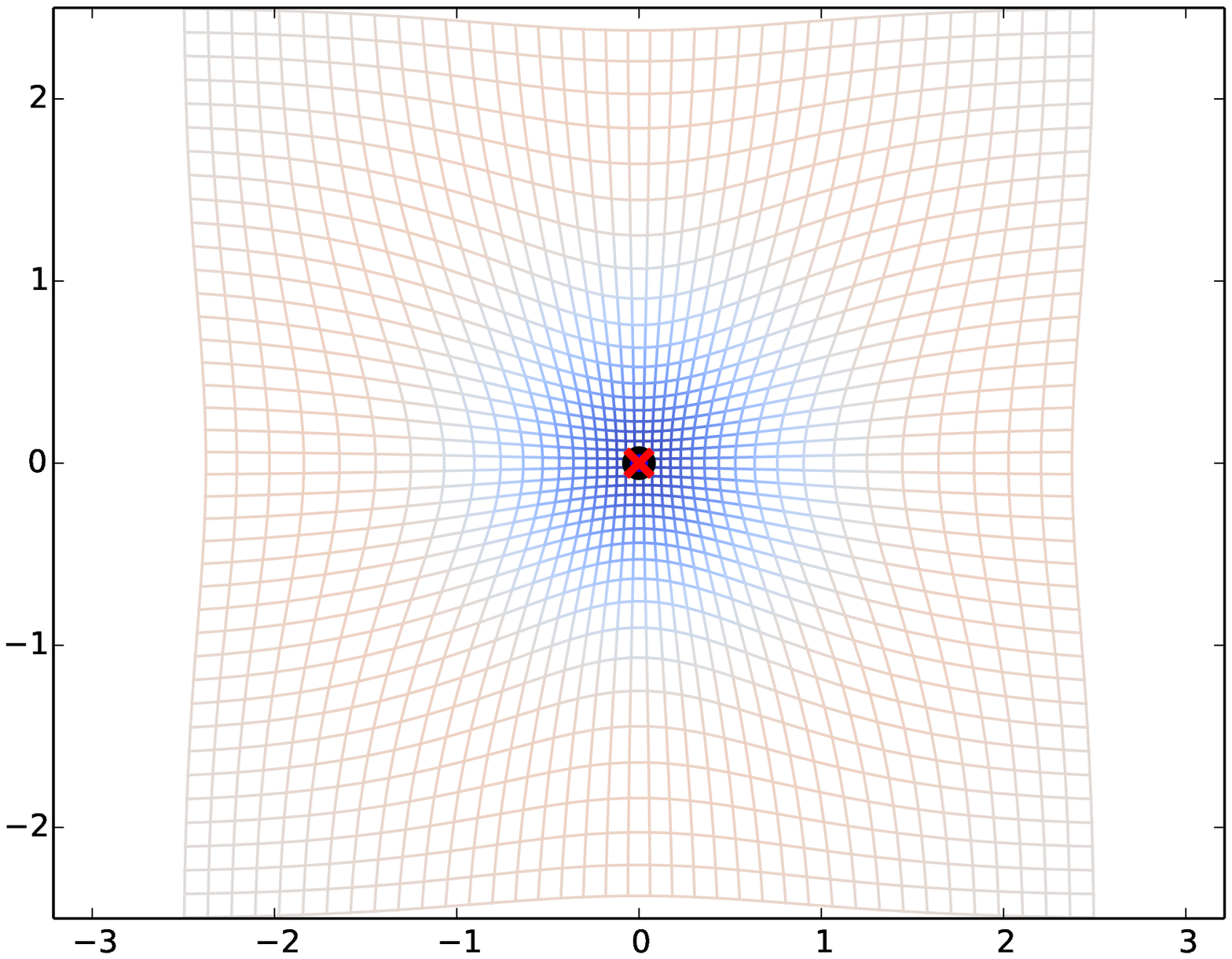}
    %}
    %\subfigure[moving image]{
      \includegraphics[width=.25\columnwidth,trim=140 80 130 80,clip]{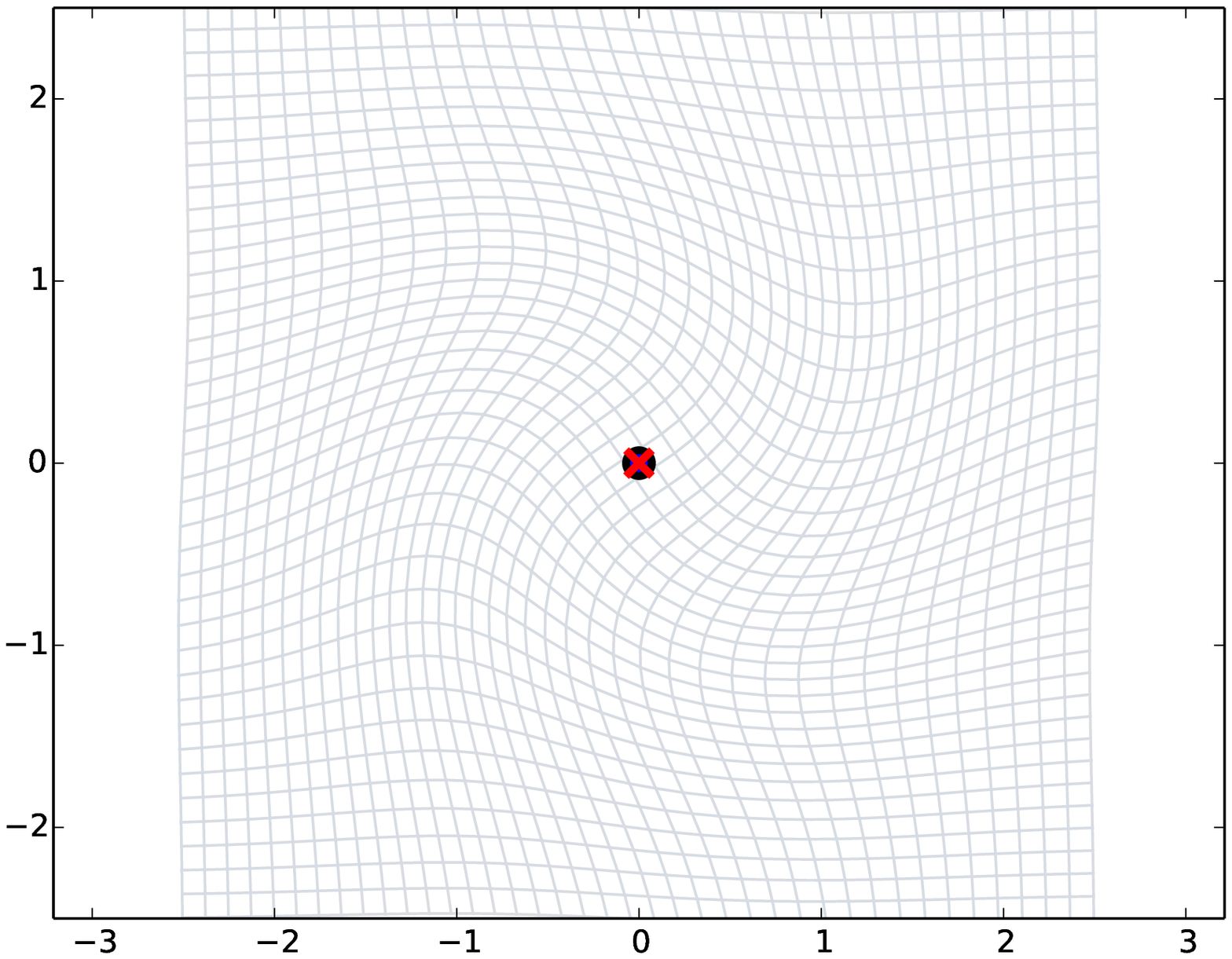}
    %}
    %\subfigure[warp]{
      \includegraphics[width=.25\columnwidth,trim=140 80 130 80,clip]{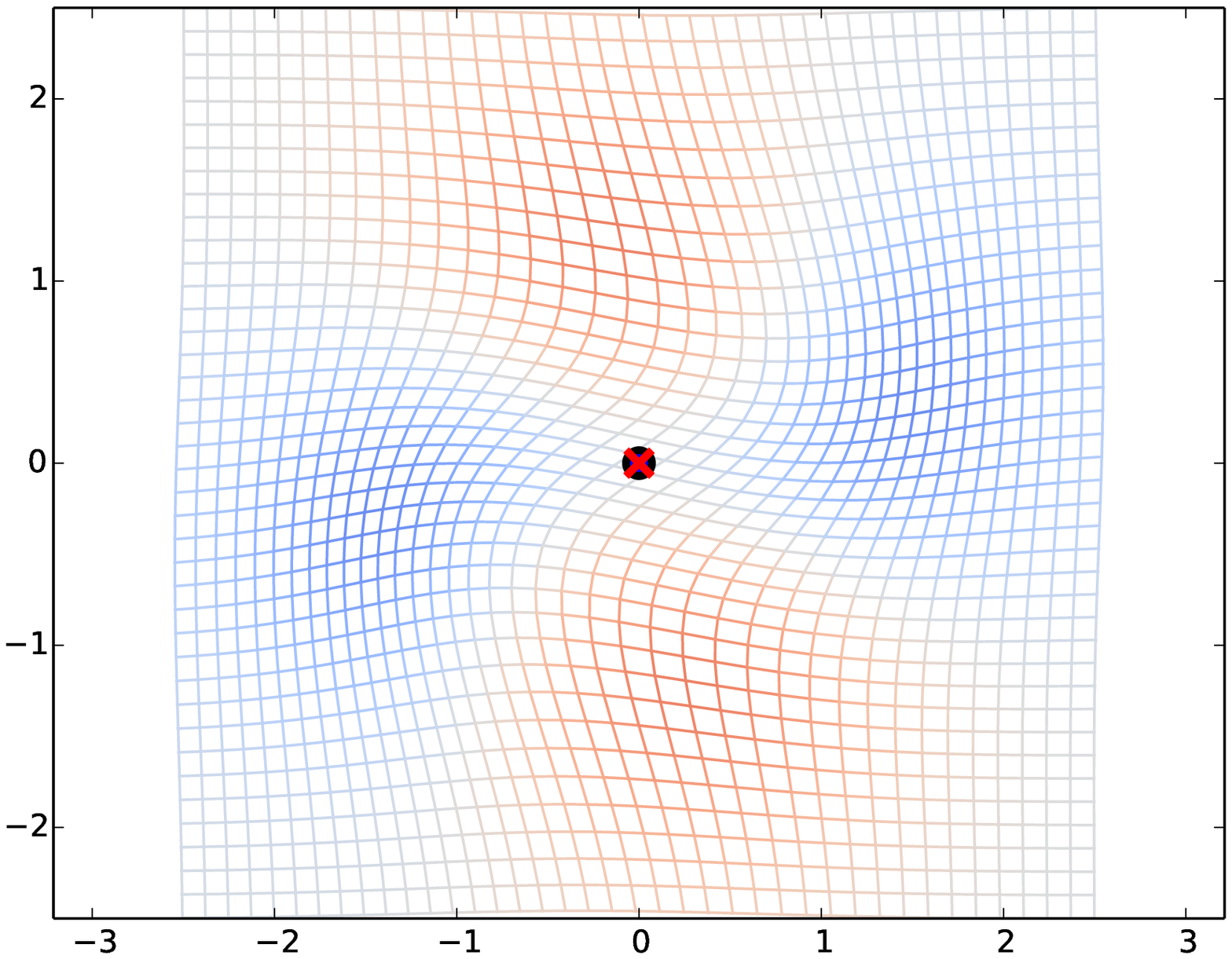}
    %}
  \end{center}
  \caption{Examples of elements of the isotropy subgroup $\{ \psi \in \Diff(M) \mid \psi(x) = x\}$
     for a one-point matching problem with dissimilarity measure $F(\varphi) =
     \| \varphi(x) - y \|^2$ visualized by their effect on an initially square
     grid. The isotropy subgroup leaves $F$ invariant by not moving $x$.
    }
  \label{fig:isotropy}
\end{figure}

If one is able to understand $Q$ then one can use this insight
to reformulate the dissimilarity measure $F$ as 
function on $Q$ rather than $\Diff(M)$.
In particular, there neccessarily exists a unique function $F_{Q} : Q \to \mathbb{R}$
defined by the property $F_{Q}( [\varphi]_{/G_F} ) = F(\varphi)$.
Again, this is useful in the sense of data, as is illustrated in the following example.

\begin{Example}
  Consider the dissimilarity measure $F$ of example \ref{ex:two_particles}.
  The function, $F_{Q}:Q \to \mathbb{R}$ is
  \begin{align*}
    F_{Q}( z_1,z_2) = \| z_1 - y_1 \|^2 + \| z_2 - y_2 \|^2.
  \end{align*}
\end{Example}

Finally, note that $\Diff(M)$ acts upon $Q$ by the left action
\begin{align*}
  [\varphi]_{G_F} \in \Diff(M) / G_F \stackrel{\psi \in \Diff(M) }{\longmapsto} [\psi \circ \varphi]_{/G_F} \in \Diff(M) / G_F.
\end{align*}
Usually we will simply write $\psi \cdot q$ for the action of $\psi \in \Diff(M)$ on a 
given $q \in Q$.
This means that $\mathfrak{X}(M)$ acts upon $Q$ infinitesimally, as it is the Lie
algebra of $\Diff(M)$.
\begin{Example}
  Consider the setup of example \ref{ex:two_particles}.
  Here $Q = M \times M - \Delta_{M \times M}$ and the left action of $\Diff(M)$
  is given by
  \begin{align*}
    \psi \cdot (q_1,q_2) = ( \psi(q_1) , \psi(q_2) )
  \end{align*}
  for $\psi \in \Diff(M)$ and $q = (q_1,q_2) \in Q$.
  The infinitesimal action of $u \in \mathfrak{X}(M)$ on $Q$ is
  \begin{align*}
    u \cdot (q_1,q_2) = (u(q_1) , u(q_2) ) \in T_qQ.
  \end{align*}
\end{Example}

These constructions allow us to rephrase the initial optimization problem using a reduced
curve energy.  Minimization of $E$ is equivalent to minimization of
\begin{align*}
  E_{Q} = \int_{0}^{1} \| u \| dt + F_{Q}\left(q(1) \right)
\end{align*}
where $q(1)$ is obtained by integrating the ODE, $\frac{dq}{dt} = u \cdot q$
with the intial condition $q(0) = [\Id]_{/G_F}$ where $\Id \in \Diff(M)$ is the identity transformation.
We see that this curve energy only depends on the Eulerian velocity field 
and the equivalence class $q(1)$.
Minimizers of $E_{Q}$ must neccessarily satisfy
\begin{align}
  \begin{cases}
  \partial_t m + \pounds_u[m] \quad , \quad m = P[u] \\
  \langle u(1)  , w \rangle = - DF(q) \cdot (w \cdot q) \quad,\quad \forall w \in \mathfrak{X}(M).
  \end{cases} \label{eq:extreme3}
\end{align}
Again, the solution only depends on the Eulerian velocity and $q(1)$.
For this reason, we see that the $G_F$ symmetry of $F$ provides a second reduction in
the data needed to solve our original problem.

\subsection{Orthogonality}
In addition to reducing the amount of data we must keep track of 
there is an additional consequence to the $G_F$-symmetry of $F$.
In particular, there is a potentially massive constraint satisfied
by the Eulerian velocity $u$.

To describe this we must introduce an isotropy algebra.
Given $q(t) = [\Phi^u_{0,t}]_{/G_F}$
we can define the (time-dependent) isotropy algebra
\begin{align*}
  \mathfrak{g}_{q(t)} = \{ w \in \mathfrak{X}(M) \mid w \cdot q(t) = 0 \}.
\end{align*}
This is nothing but the Lie-algebra associated to the isotropy group
$G_{q(t)} = \{ \psi \in \Diff(M) \mid \psi \cdot q(t) = q(t) \}$.

It turns out that the velocity field $u(t)$ which minimizes $E$ (or $E_{Q}$)
is orthogonal to $\mathfrak{g}_{q(t)}$ with respect to the chosen inner-product.
Intuitively this is quite sensible because velocities
which do not change $q(t)$ do not alter the data, and
simply waste control effort.
This intuitive statement is roughly the content of the following proof.

\begin{Proposition}
  Let $u$ satisfy \eqref{eq:extreme2} or \eqref{eq:extreme3}.
  Then $m = P[u]$ anihillates $\mathfrak{g}_{q(t)}$.
\end{Proposition}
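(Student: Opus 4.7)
The plan is a variational argument, following the paper's own intuition that velocities in $\mathfrak{g}_{q(t)}$ ``do not alter the data and simply waste control effort.'' If $u$ is an extremum of $E_{Q}$, I would perturb $u \mapsto u_\epsilon := u + \epsilon\, \delta u$ with $\delta u(t) \in \mathfrak{g}_{q(t)}$ for every $t$, show that such a perturbation leaves the trajectory $q(\cdot)$ unchanged to first order, and conclude that the first-order variation $\partial_\epsilon E_{Q}[u_\epsilon]|_{\epsilon=0}$ collapses to $2\int_0^1 \langle m(t), \delta u(t)\rangle\, dt$, whose vanishing forces $m(t)$ to annihilate $\mathfrak{g}_{q(t)}$.

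The first key step is linearization. Writing $q_\epsilon = q + \epsilon\, \xi + O(\epsilon^2)$ and linearizing $\dot q = u \cdot q$ in $\epsilon$, the perturbation $\xi(t) \in T_{q(t)} Q$ obeys a linear ODE whose inhomogeneous source is exactly $\delta u(t) \cdot q(t)$. By definition of $\mathfrak{g}_{q(t)}$ this source vanishes identically, so the ODE is homogeneous; together with $\xi(0) = 0$ this gives $\xi \equiv 0$. Consequently $F_{Q}(q_\epsilon(1)) = F_{Q}(q(1)) + O(\epsilon^2)$, and only the kinetic term survives: $\partial_\epsilon E_{Q}[u_\epsilon]|_{\epsilon=0} = 2\int_0^1 \langle P[u(t)], \delta u(t)\rangle\, dt = 2\int_0^1 \langle m(t), \delta u(t)\rangle\, dt$. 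Since $u$ is an extremum, this integral must vanish for every admissible $\delta u$.

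The second step is to extract pointwise annihilation. Given $t_0 \in [0,1]$ and $w_0 \in \mathfrak{g}_{q(t_0)}$, I would construct an admissible $\delta u$ by transporting $w_0$ along the flow of $u$: set $W(t) := (\Phi^u_{t_0, t})_* w_0$. Because $q(t) = \Phi^u_{t_0, t} \cdot q(t_0)$ and the action is a group action, the stabilizers are conjugate, $G_{q(t)} = \Phi^u_{t_0, t}\, G_{q(t_0)}\, (\Phi^u_{t_0, t})^{-1}$; differentiating yields $\mathfrak{g}_{q(t)} = (\Phi^u_{t_0, t})_* \mathfrak{g}_{q(t_0)}$, so $W(t) \in \mathfrak{g}_{q(t)}$ for every $t$. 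Plugging $\delta u(t) := \alpha(t) W(t)$ into the first-order identity, allowing $\alpha$ to range over smooth bumps concentrated at $t_0$, the fundamental lemma of the calculus of variations yields $\langle m(t_0), w_0\rangle = 0$. Arbitrariness of $t_0$ and $w_0$ completes the proof.

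The principal obstacle I anticipate is making Step 1 rigorous on the (potentially infinite-dimensional) homogeneous space $Q = \Diff(M) / G_F$: one must verify the Leibniz-type identity $\partial_\epsilon (u_\epsilon \cdot q_\epsilon)|_{\epsilon=0} = \delta u \cdot q + D(u\cdot)|_q \cdot \xi$ and invoke uniqueness for the resulting linear ODE in an appropriate function-space setting. Once this functional-analytic machinery is in place, the overall argument is a textbook Noether-type manipulation: symmetry of the cost function along the trajectory forces the conjugate momentum to annihilate the symmetry direction, at every instant of time.
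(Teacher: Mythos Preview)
Your argument is correct but takes a different route from the paper's. The paper works directly from the equations \eqref{eq:extreme3} rather than returning to the variational problem: the terminal boundary condition forces $\langle m(1), w\rangle = 0$ for any $w \in \mathfrak{g}_{q(1)}$ (since $\Phi^{w}_\epsilon$ fixes $q(1)$ and hence $F_Q$); it then transports $w$ backward via $w(t) = (\Phi^u_{t,1})_* w(1)$ and uses the momentum equation $\partial_t m + \pounds_u m = 0$ together with $\partial_t w + \pounds_u w = 0$ and the product rule \eqref{eq:product_rule} to show that $\langle m(t), w(t)\rangle$ is constant in $t$, hence identically zero. Both proofs hinge on the same conjugacy $\mathfrak{g}_{q(t)} = (\Phi^u_{s,t})_* \mathfrak{g}_{q(s)}$; the paper's is more faithful to the hypothesis as literally stated (it uses only \eqref{eq:extreme3}, whereas you assume $u$ is an extremum of $E_Q$, which is the converse of what the paper asserted and deserves a sentence of justification), while yours makes the Noether mechanism transparent and would port to other invariant cost functionals without first deriving the Euler--Poincar\'e equations.
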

\begin{proof}
  Let $u$ be the solution to \eqref{eq:extreme3}.
  We will first prove that $u(1)$ (this is $u$ at time $t=1$)
  is orthogonal to $\mathfrak{g}_{z(1)}$.
  Let $w(1) \in \mathfrak{g}_{z(1)}$.
  We observe
  \begin{align*}
    \langle P[u(1)] , w(1) \rangle \stackrel{\text{by \eqref{eq:extreme3}}}{=}
    - \left.\frac{d}{d\epsilon}\right|_{\epsilon=0} F_{Q}( \Phi^{w(1)}_\epsilon z(1) ). 
  \end{align*}
  However, $w(1)$ leaves $z(1)$ fixed, so $\Phi^{w(1)}_\epsilon \cdot q(1) = 0$.
  Therefore $\langle P[u(1)] , w(1) \rangle = 0$.
  Let  $w(t) = [\Phi^u_{t,1}]^*w(1)$
  In coordinates this means
  \begin{align*}
    w^i(t,x) = \left.\partial_j\right|_{[\Phi^u_{t,1}]^{-1}(x)}
    [\Phi^u_{t,1}]^i w^j \left(1, [\Phi^u_{t,1}]^{-1}(x) \right)
  \end{align*}
  One can directly verify that $w(t) \in \mathfrak{g}_{z(t)}$ for all $t \in [0,1]$.
  Denoting $m(t) = P[u(t)]$, as in \eqref{eq:extreme3}, we find
  \begin{align*}
    \frac{d}{dt} \langle P[u(t)] ,  w(t) \rangle &= 
    \frac{d}{dt} \langle m(t) , w(t) \rangle 
    \langle \partial_t m , w \rangle + \langle m , \partial_t w \rangle \\
    &=\langle - \pounds_u [m] , w \rangle + \langle m , -\pounds_u[w] \rangle = 0.
  \end{align*}
  Where the last equality follows from \eqref{eq:product_rule}.
  Thus $\langle P[u(t)] , w(t) \rangle$ is constant.
  We've already verified that at $t=1$, this inner-product is zero, thus
  $\langle P[u(t)] , w(t) \rangle = 0$ for all time.
  That $w(1)$ is an arbitrary element of $\mathfrak{g}_{q(1)}$
  makes $w(t)$ an arbitrary element of $\mathfrak{g}_{q(t)}$ at each time.
  Thus $u(t)$ is orthogonal to $\mathfrak{g}_{q(t)}$ for all time.
\end{proof}

At this point, we should return to our example to illustrate this idea.
\begin{Example}
  \label{ex:ex5}
  Again consider the setup of example \ref{ex:two_particles}.
  In this case $q(t) = (q_1(t) , q_2(t) ) \in M \times M - \Delta_{M \times M}$.
  The space $\mathfrak{g}_{z(t)}$ is the space of vector-fields
  which vanish at $q_1(t)$ and $q_2(t)$.
  Therefore,
  $u(t)$ is orthogonal to $q(t)$ if and only if $m = P[u]$ satisfies
  \begin{align*}
    \langle m , v \rangle = p_1 \cdot v(z_1(t)) + p_2 \cdot v(z_2(t))
  \end{align*}
  for some covectors $p_1,p_2$ and
  for any $v \in \mathfrak{X}(M)$.
  In other words
  \begin{align*}
    m = p_1(t) \otimes \delta_{q_1(t)}( \cdot ) + p_2 \otimes \delta_{q_2(t)}(\cdot)
  \end{align*}
  where $\delta_x( \cdot )$ denotes the Dirac delta functional cetnered at $x$.
\end{Example}

This orthogonality constrain allows one to reduce the evolution equation on 
$\mathfrak{X}(M)$ to an evolution equation on $Q$ (which might be finite
dimensional if $G_F$ is large enough).
In particular there is a map $V : TQ \to \mathfrak{X}(M)$
uniquely defined by the conditions
$V(q,\dot{q}) \cdot q = 0$
and $V(q,\dot{q}) \perp \mathfrak{g}_{q}$ with respect to the chosen inner-product
on vector-fields.

\begin{Example}
  Consider the setup of example \ref{ex:two_particles} with $M = \mathbb{R}^n$.
  Then $Q = \mathbb{R}^n \times \mathbb{R}^n - \Delta_{\mathbb{R}^n \times \mathbb{R}^n}$.
  Let $K: \mathbb{R}^n \times \mathbb{R}^n \to \mathbb{R}^{n\times n}$ be the
  matrix-valued reproducing kernel of $P$ (see \cite{MicheliGlaunes2014}).
  Then $V:TQ \to \mathfrak{X}(\mathbb{R}^n)$
  is given by
  \begin{align*}
    V(q,\dot{q}) (x) = K(x - q_1) \cdot p_1 + K(x - q_2) \cdot p_2
  \end{align*}
  where $p_1,p_2 \in \mathbb{R}^n$ are such that $p_1 + K(q_1-q_2) p_2 = \dot{q}_1$
  and $K(q_2-q_1) p_1 + p_2 = \dot{q}_2$.
\end{Example}

One can immediately observe that $V$ is injective and linear in $\dot{q}$.
In other words $V(q,\cdot) : T_qQ \to \mathfrak{X}(M)$ is an injective linear
map for fixed $q \in Q$.
Because the optimal $u(t)$ is orthogonal to $\mathfrak{g}_{q(t)}$
we may invert $V(q(t), \cdot )$ on $u(t)$.
In particular, we may often write the equation of motion on $TQ$ rather than on $\mathfrak{X}(M)$.
This is a massive reduction if $Q$ is finite dimensional.
In particular, the inner-product structure on $\mathfrak{X}(M)$
induces a Riemannian metric on $Q$ given by
\begin{align*}
  g_q(v_1,v_2) = \langle P[ V(q,v_1) ] , V(q,v_2) \rangle.
\end{align*}
The equations of motion in \eqref{eq:extreme2} and \eqref{eq:extreme3}
map to the geodesic equations on $Q$.

\begin{Proposition}
  Let $u$ extremize $E$ or $E_Q$.
  Then there exists a unique trajectory $q(t) \in Q$
  such that $u = V(\frac{dq}{dt})$.
  Moreover, $q(t)$ is a geodesic with respect to the 
  metric $g$.
\end{Proposition}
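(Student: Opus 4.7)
The plan is to exhibit $q(t)$ explicitly, verify that it has the claimed relationship to $u$, and then compare with competing curves on $Q$ to obtain the geodesic property.

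First I would set $q(t) := [\Phi^u_{0,t}]_{/G_F}$, which is the projection to $Q$ of the flow generated by the extremal $u$; in particular $q(0) = [\Id]_{/G_F}$. Unwinding the definitions of the left $\Diff(M)$-action on $Q$ and its induced infinitesimal action by $\mathfrak{X}(M)$ yields $\dot q(t) = u(t)\cdot q(t)$. The preceding proposition says that $m(t) = P[u(t)]$ annihilates $\mathfrak{g}_{q(t)}$, i.e.\ $u(t)$ is orthogonal to $\mathfrak{g}_{q(t)}$ in the inner product defined by $P$. Since $V(q,\dot q)$ is by construction the unique element of $\mathfrak{X}(M)$ whose infinitesimal action at $q$ produces $\dot q$ and which is orthogonal to $\mathfrak{g}_q$ (reading the stated condition $V(q,\dot q)\cdot q = 0$ as the intended $V(q,\dot q)\cdot q = \dot q$), we conclude $u(t) = V(q(t), \dot q(t))$.

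For uniqueness of $q$, suppose $q'(t)$ is another curve in $Q$ with $u = V(q',\dot q')$ and $q'(0)=[\Id]_{/G_F}$. Then $\dot q' = V(q',\dot q')\cdot q' = u\cdot q'$, which is the same first-order ODE on $Q$ as the one satisfied by $q$, with identical initial data. Standard ODE uniqueness forces $q' = q$.

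For the geodesic assertion I would compare with an arbitrary smooth variation $\tilde q(t)$ satisfying $\tilde q(0)=q(0)$ and $\tilde q(1)=q(1)$. Setting $\tilde u(t) := V(\tilde q(t), \dot{\tilde q}(t))$ produces a time-dependent vector field whose flow on $Q$ realizes $\tilde q$, and in particular $F_Q(\tilde q(1)) = F_Q(q(1))$. Because $u$ minimizes $E_Q$, the inequality
\begin{align*}
\int_0^1 \|u(t)\|\,dt \;\leq\; \int_0^1 \|\tilde u(t)\|\,dt
\end{align*}
must hold, and by the defining identity $\|V(q,\dot q)\|^2 = \langle P[V(q,\dot q)], V(q,\dot q)\rangle = g_q(\dot q,\dot q)$ both sides are the Riemannian arc-lengths of $q$ and $\tilde q$ in $(Q,g)$. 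Therefore $q$ minimizes length among curves joining its endpoints, and after parameterizing by constant speed it is a geodesic of $g$; constant speed follows from the fact that $\|u\|$ is determined up to reparameterization by the endpoint-constrained minimization of $\int \|u\|\,dt$.

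The main obstacle is the regularity needed to turn the variational comparison into a rigorous argument: one must know that $V(q,\dot q)$ depends smoothly enough on $(q,\dot q)$ that $\tilde u$ lies in the admissible class of velocity fields and integrates to a path in $\Diff(M)$. This is where the hypotheses on the reproducing kernel of $P$ (as used in the well-posedness discussion after \eqref{eq:extreme1}) enter. Once this lifting step is granted, the three ingredients above—construction of $q$, ODE uniqueness, and endpoint-fixed variational comparison—combine to give the proposition.
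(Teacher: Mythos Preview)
Your argument is correct and follows essentially the same route as the paper: define $q(t)=[\Phi^u_{0,t}]_{/G_F}$, invoke the preceding orthogonality proposition to place $u(t)$ in $\mathfrak{g}_{q(t)}^\perp$, use the defining property of $V$ to write $u=V(q,\dot q)$, and then rewrite $E$ (or $E_Q$) as the Riemannian length in $(Q,g)$ plus the endpoint term so that extremality forces the geodesic equation. You are more explicit than the paper on two points---the ODE-uniqueness argument for $q$ and the endpoint-fixed comparison with a competitor $\tilde q$ lifted via $\tilde u=V(\tilde q,\dot{\tilde q})$---and you correctly flag both the typo in the defining condition for $V$ and the regularity caveat on the lifting step, which the paper leaves implicit.
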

\begin{proof}
  Let $u$ minimize $E$.  Thus $u$ satisfies \eqref{eq:extreme3}.
  By the previous proposition $u(t)$ is orthogonal to $\mathfrak{g}_{q(t)}$.
  As $V_{q(t)}:T_{q(t)}Q \to \mathfrak{X}(M)$ is injective
  on $\mathfrak{g}_{q(t)}^\perp$, there exists a unique $\dot{q}(t)$
  such that $V(q(t),\dot{q}(t)) = u(t)$.
  Note that $E$ can be written as
  \begin{align*}
    E = \int  \| V(q(t) , \dot{q}(t) ) \| dt + F(q(1))
    = \int g(q,\dot{q} , \dot{q}) ^{1/2} dt + F(q(1)).
  \end{align*}
  Thus, minimizers of $E$ correspond to geodesics in $Q$ with
  respect to the metric $g$.
\end{proof}

If we let $H:T^*Q \to \mathbb{R}$ be the Hamiltonian
induced by the metric on $Q$
we obtain the most data-efficient form or \eqref{eq:extreme2} and \eqref{eq:extreme3}.
Minimizers of $E$ (or $E_Q$) are:
\begin{align}
  \begin{cases}
    (q,p)(t) \in T^*Q \text{ satisfies Hamilton's equations} \\
    p(1) = - DF_Q(q) \\
    q(0) = [ e]_{/G_F}.
  \end{cases}
  \label{eq:extreme4}
\end{align}
We see that this is a boundary value problem posed entirely on $Q$.
If $Q$ is finite dimensional,
this is a massive reduction in terms of data requirements.

\begin{Example}
  Consider the setup of example \ref{ex:two_particles}
  with $M = \mathbb{R}^n$.
  The metric on $Q = M \times M - \Delta_{M^2}$ is
  most easily expressed on the cotangent bundle $T^*Q$.
  If $K$ is the matrix valued kernel of $P$, the metric
  on $T^*Q$ takes the form
  \begin{align*}
    g^*_q( p,p') = \sum_{i,j=1}^2 p_i^T K(q_i - q_j)  p_j'.
  \end{align*}
\end{Example}

\subsection{Descending Group Action}
A related approach to defining distances on a space of objects
to be registered consists of defining an object space $\mathcal{O}$ upon
which $\Diff(M)$ acts transitively\footnote{
This means that for any $o_1,o_2 \in \mathcal{O}$ there exists a $\varphi \in \Diff(M)$
such that $\varphi \cdot o_1 = o_2$
} with distance
\begin{equation*}
  d_{\mathcal{O}}(o_1,o_2)
  =
  \inf_{\varphi\in\Diff(M)}\{d(id,\varphi)\,|\,\varphi\cdot o_1=o_2\}
  \ .
\end{equation*}
Here the distance on $\mathcal{O}$ is defined directly from the distance in
the group that acts on the objects, see for example
\cite{younes_shapes_2010,younes_evolutions_2009}. With this approach, the Riemannian metric
descends from $\Diff(M)$ to a Riemannian metric on $\mathcal{O}$ and geodesics
on $\mathcal{O}$ lift by horizontality to geodesics on $\Diff(M)$.
The quotient spaces $Q$ obtained by reduction by symmetry and their geometric
structure corresponds to the object spaces and geometries defined with this
approach. Intuitively, reduction by symmetry can be considered a removal of
redundant information to obtain compact representations while letting the
metric descend to the object space $\mathcal{O}$ constitutes an approach to defining a
geometric structure on an already known space of objects.
The solutions which result are equivalent to the ones presented in this article
because $\mathcal{O} \cong \Diff(M) / G_o$
where $G_o = \{ \psi \in \Diff(M) \mid \psi(o) = o \}$ for some fixed reference object
$o \in \mathcal{O}$.

\section{Examples}
We here give a number of concrete examples of how symmetry reduce the
infinite dimensional registration problem over $\Diff(M)$ to lower, in some
cases finite, dimensional problems. In all examples, the symmetry of the
dissimilarity measure with respect to a subgroup of $\Diff(M)$ gives a reduced space
by quotienting out the symmetry subgroup.

\subsection{Landmark Matching}
The space $Q$ used in the examples in Section~\ref{sec:red} constitutes a
special case of the landmark matching problem where sets of landmarks
$Q=\{(x_1,\ldots,x_N)|\,x_i\in M,\,x_i\not = x_j\,\forall i\not=j\}$, are placed
into spatial correspondence trough the left action
$\varphi\cdot (x_1,\ldots,x_N)=(\varphi(x_1),\ldots,\varphi(x_N)\}$
of $\Diff(M)$
by minimizing the dissimilarity measure $F(\varphi)=\sum_{i=1}^N\|\varphi(x_i)-x_i\|^2$.
The landmark space $Q$ arises as a quotient of $\Diff(M)$ from the symmetry
group $G_F$ as in in Example~\ref{ex:two_particles}.

Reduction from $\Diff(M)$ to $Q$ in the landmark case has been used in a series
of papers starting with
\cite{joshi_landmark_2000}. Landmark matching is a special case of jet matching
as discussed below. Hamilton's equations \eqref{eq:extreme4} take the form
\begin{align*}
    & \dot{q}_i = \sum_{j=1}^N K(q_i - q_j)  p_j\quad,\quad
     \dot{p}_i = -\sum_{j=1}^N \left( DK(q_i - q_j) p_j \right)^T p_i
\end{align*}
on $T^*Q$ where $DK$ denotes the spatial derivative of the reproducing kernel
$K$. Generalizing the situation in Example~\ref{ex:ex5},
the momentum field is a finite sum of Dirac measures
$\sum_{j=1}^Np_j\otimes\delta_{q_j}$ that through the map $V$ gives an Eulerian velocity field 
as a finite linear combination of the kernel evaluated at $q_i$:
$u(\cdot)=\sum_{j=1}^NK(\cdot-q_j)p_j$.
Registration of landmarks is often in practice done by optimizing over the initial
value of the momentum $p$ in the ODE to minimize $E$, a strategy called shooting
\cite{vaillant_statistics_2004}. Using symmetry, the
optimization problem is thus reduced from an infinite dimensional time-dependent
problem to an $N\dim(M)$ dimensional optimization problem involving integration of a
$2N\dim(M)$ dimensional ODE on $T^*Q$.

\subsection{Curve and Surface Matching}
The space of smooth non-intersecting closed parametrized curves
in $\mathbb{R}^n$ is also known as the space of
embeddings, denoted $\Emb(S^1 , \mathbb{R}^n)$.
The parametrization can be removed by considering the right action of $\Diff(S^1)$ on $\Emb(S^2,\mathbb{R}^n)$
given by
\begin{align*}
  c \in \Emb(S^1, \mathbb{R}^n) \stackrel{\psi \in \Diff(S^1) }{\mapsto} c \circ \psi \in \Emb(S^1, \mathbb{R}^n).
\end{align*}
Then the quotient space $\Gr(S^1, \mathbb{R}^n) := \Emb( S^1 , \mathbb{R}^n ) / \Diff(S^1)$
is the space of \emph{unparametrized curves}.
The space $\Gr(S^1,\mathbb{R}^n)$ is a special case of a nonlinear Grassmannian 
\cite{GayBalmazVizman2012}.  It is not immediately clear if this space is a manifold, although
it is certainly an orbifold.  In fact the same question can be asked of $\Diff(\mathbb{R}^n)$ and $\Emb(S^1,\mathbb{R}^n)$.
A few conditions must be enforced on the space
of embeddings and the space of diffeormophisms
in order to impose a manifold structure on these spaces, and these conditions
along with the metric determine whether or not the quotient $\Gr(S^1,\mathbb{R}^n)$ 
can inherit a manifold structure.
We will not dwell upon these matters here, but instead we refer the reader
to the survey article \cite{BauerBruverisMichor2014}.

When the parametrization is not removed, embedded curves and surfaces can be
matched with the current dissimilarity measure
\cite{vaillant_surface_2005,glaunes_transport_2005}. The objects are considered
elements of the dual of the space $W$ of differential $k$-forms on $M$. In the surface
case, the surface $S$ can be evaluated on a $2$-form $w$ by
\begin{equation}
  S(w)
  =
  \int_S
  w(x)
  (e_x^1,e_x^2)
  d\sigma(x)
  \label{eq:current}
\end{equation}
where $(e_x^1,e_x^2)$ is an orthonormal basis for $T_xS$ and $\sigma$ the
surface element. The dual space $W^*$ is linear an can be equipped with a norm
thereby enabling surfaces to be
compared with the $W^*$ norm. Note that the evaluation \eqref{eq:current}
does not depend on the parametrization of $S$.

The isotropy groups for curves and surfaces generalize the isotropy groups of
landmarks by consisting of warps that keeps the objects fixed, i.e.
\begin{align*}
    G_F \equiv \{ \psi \in \Diff(M) \mid \psi(S) = S \}
    \ .
\end{align*}
The momentum field will be supported on the transported
curves/surfaces $\varphi(t).S$ for optimal paths for $E$ in $\Diff(M)$.
%\ss{Henry, is it immediate what the reduced space $Q$ is here? With currents,
%  the curves/surfaces are parametrization invariant so the momentum will be
%  orthogonal to $\varphi(t).S$ (I think). So I guess $Q$ is not just the
%  transported curve/surface but something smaller\ldots}
% \hoj{It looks fine to me.  The $G_F$ written here suggests $Q = \Gr(S,M)$. 
% Generally if $\Diff(M)$ acts transitively on $\mathcal{O}$ then $\mathcal{O}  = \Diff(M) / G_o$
% where $o \in \mathcal{O}$ is some fixed reference.  We appear to be doing the special case of
% sub manifolds.  If the parametrization did matter then the $G_F$ we would have used would look more like
% $G_F = \{ \psi \mid \psi( c(\theta) ) = c(\theta) , \forall \theta \in S \}$ for some reference $c \in \Emb(S,M)$,
% which is a smaller isotropy group (so the quotient is larger). }
  
%\ldots
%\begin{figure}[t]
%  \begin{center}
%    %\includegraphics[width=.30\columnwidth,trim=0 0 0 0,clip]{}
%  \end{center}
%  \caption{forward flow of curve/codim 1 submanifold
%    }
%
%  \label{fig:curves}
%\end{figure}

\subsection{Image Matching}
Images can be registered using either the $L^2$-difference defined in
Example~\ref{ex:imssd} or with other dissimilarity measures such as mutual information 
or correlation ratio \cite{wells_multi-modal_1996,roche_correlation_1998}. The
similarity will be invariant to any infinitesimal deformation orthogonal to the
gradient of dissimilarity measure. In the $L^2$ case, this is equivalent to any infinitesimal deformation
orthogonal to the level lines of the moving image \cite{miller_geodesic_2006}.
The momentum field thus has the form $p(t)=\alpha(t)\nabla\varphi(t).I_0$ for a smooth
function $\alpha(t)$ on $M$ and the registration problem can be reduced to a search over the
scalar field $\alpha(t)$ instead of vector field $p(t)$.

Minimizers for $E$ follow the PDE \cite{younes_evolutions_2009}
\begin{equation}
  \begin{split}
    & \dot{v} = \int_M K(\cdot - y)  \alpha(y)\nabla m(y) dy\quad,\quad
     \dot{m} = -\nabla m^Tv\quad,\quad
     \dot{\alpha} = -\nabla\cdot (\alpha v)
  \end{split}
     \label{eq:impde}
\end{equation}
with $m(t)$ representing the deformed image at time $t$.

\begin{figure}[t]
  \begin{center}
    \includegraphics[width=.30\columnwidth,trim=0 0 0 0,clip]{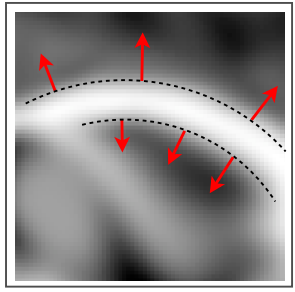}
  \end{center}
  \caption{In image matching, the gradient of the $L^2$-difference will be
    orthogonal to level lines of the image and symmetry implies that the momentum
    field will be orthogonal to the level lines so that
    $p(t)=\alpha(t)\nabla\varphi(t).I_0$ for a time-dependent scalar field $\alpha$.
    }

  \label{fig:imagematch}
\end{figure}

\subsection{Jet Matching}
In \cite{sommer_higher-order_2013,jacobs_symmetries_2013} an extension of the landmark
case has been developed where higher-order information is advected with the
landmarks. These higher-order particles or \emph{jet-particles} have
simultaneously been considered in fluid dynamics \cite{jacobs_coupling_2013},

The spaces of jet particles arise as extensions of the reduced landmark space 
$Q$ by quotienting out smaller isotropy
subgroups. Let $G^{(0)}$ be the isotropy subgroup for a single landmark
\[
	G^{(0)} := \{ \psi \in G \mid \psi(q) = q \}
\]
Let know $k$ be a positive integer. For any $k$-differentiable map $f$ from a neighborhood of $q$, the $k$-jet of
$f$ is denoted $\Jet^{(k)}_{q}( f )$. In coordinates, $\Jet^{(k)}_{q}( f )$
consists of the coefficients of the $k$th order Taylor expansions of $f$ about 
at $x$. The higher-order isotropy subgroups are then given by
\[
	G^{(k)} :=  \{ \psi \in G^{(0)} \mid \Jet_q^{(k)} \psi = \Jet_q^{(k)} \Id \}
    \ .
\]
That is, the elements of $G^{(k)}$ fix the Taylor expansion of the deformation
$\varphi$ up to order $k$. The definition naturally extends to finite number of
landmarks, and the quotients $Q^{(k)}=G/G^{(k)}$ can be identified as the sets
\begin{align*}
      & Q^{(0)} = \{(q_1,\ldots,q_N)\,|\,q_i\in M\} 
    \\
    & Q^{(1)} = \{(
      (q_i^{(0)},q_i^{(1)}),\ldots)\,|\,(q_i^{(0)},q_i^{(1)},\ldots)\in M\times\GL(d)\} 
    \\
    & Q^{(2)} = \{(
      (q_i^{(0)},q_i^{(1)},q_i^{(2)}),\ldots)\,|\,(q_i^{(0)},q_i^{(1)},q_i^{(2)},\ldots)\in M\times\GL(d)\times S^1_2\}
\end{align*}
with $S^1_2$ being the space of rank $(1,2)$ tensors.
Intuitively, the space $Q^{(0)}$ is the regular landmark space with information
about the position of the points; the 1-jet space $Q^{(1)}$ carry for each jet
information about the position and the Jacobian matrix of the warp at the jet
position; and the 2-jet space $Q^{(2)}$ carry in addition the Hessian matrix of 
the warp at the jet position. The momentum for $Q^{(0)}$ in coordinates consists of $N$
vectors representing the local displacement of the points. With the 1-jet space
$Q^{(0)}$, the momentum in addition contains $d\times d$ matrices that can be
interpreted as locally linear deformations at the jet positions
\cite{sommer_higher-order_2013}. In combination with the displacement, the
1-jet momenta can thus be regarded locally affine transformations. The momentum
fields for $Q^{(2)}$ add symmetric tensors encoding local second order
deformation. The local effect effect of the jet particles is sketched in
Figure~\ref{fig:discImage}.

When the dissimilarity measure $F$ is dependent not just on positions but also on
higher-order information around the points, reduction by symmetry implies that 
optimal solutions for $E$ will be parametrized by $k$-jets in the same way as
$Q^{(0)}$ parametrize optimal paths for $E$ in the landmark case.
The higher-order jets can thus be used for landmark matching when the
dissimilarity measure
is dependent on the local geometry around the landmarks. For example, matching of 
first order structure such as image gradients lead to 1-order jets, and matching
of local curvature leads to $2$-order jets. 
%See Figure~\ref{fig:jetmatch} for an example of higher-order matching problem using jets.
%\begin{figure}[t]
%  \begin{center}
%    %\includegraphics[width=.30\columnwidth,trim=0 0 0 0,clip]{figures/discImage}
%  \end{center}
%  \caption{jet matching example (like landmarks), first and second order
%    }
%
%  \label{fig:jetmatch}
%\end{figure}

\subsection{Discrete Image Matching}
The image matching problem can be discretized by evaluating the $L^2$-difference
at a finite number of points. In practice, this alway happens when the integral 
$\int_M|I_0 \circ \varphi^{-1}(x) - I_1(x)|^2dx$ is evaluated at finitely many
pixels of the image. In \cite{sommer_higher-order_2013,jacobs_higher-order_2014}, 
it is shown how this
reduces the image matching PDE \eqref{eq:impde} to a finite dimensional system
on $Q$ when the integral is approximated by pointwise evaluation 
at a grid $\Lambda_h$
\begin{equation}
  F^{(0)} (\varphi) \approx \sum_{x \in \Lambda_h} h^d |I_0(\varphi^{-1}(x) - I_1(x) |^2
  \label{eq:F0}
\end{equation}
where $h>0$ denotes the grid spacing. $F^{(0)}$ approximates $F$ to order
$O(h^d)$, $d=\dim(M)$. The reduced space $Q$ encodes the position of the points
$\varphi^{-1}(x)$, $x\in \Lambda_h$, and the lifted Eulerian momentum field is a
finite sum of point measures
$p=\sum_{x\in\Lambda_h}a_x\otimes\delta_{\varphi^{-1}(x)}$. For each grid
point, the momentum encodes the local displacement of the point, See
Figure~\ref{fig:discImage}.

In \cite{jacobs_higher-order_2014}, a discretization scheme with higher-order 
accuracy is in addition introduced with an $O(h^{d+2})$ approximation $F^{(2)}$
of $F$. The increased accuracy results in the
entire energy $E$ being approximated to order $O(h^{d+2})$. The solution
space in this cases become the jet-space $Q^{(2)}$. For a given order of approximation, a
corresponding reduction in the
number of required discretization points is obtained. The reduction in the
number of discretization points is countered by the increased information
encoded in each 2-jet. The momentum field thus
encodes both local displacement, local linear deformation, and second order
deformation, see Figure~\ref{fig:discImage}.
The discrete solutions will converge to solutions of the non-discretized
problem as $h\rightarrow 0$.
\begin{figure}[t]
  \begin{center}
    \includegraphics[width=.30\columnwidth,trim=0 0 0 0,clip]{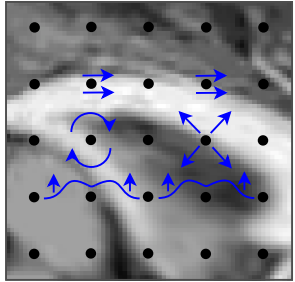}
  \end{center}
  \caption{With discrete image matching, the image is sampled at a regular grid
    $\Lambda_h$, $h>0$ and the image matching PDE \eqref{eq:impde} is reduced to
    an ODE on a
    finite dimensional reduced space $Q$. With the approximation $F^{(0)}$
    \eqref{eq:F0}, the momentum field will encode local displacement
    as indicated by the horizontal arrows (top row). With a first order expansion, the
    solution space will be jet space $Q^{(1)}$ and locally affine motion is
    encoded around each grid point (middle row). The $O(h^{d+2})$ approximation $F^{(2)}$ 
    includes second order information and the system reduces to the jet space
    $Q^{(2)}$ with second order motion encoded at each grid point (lower row).
    }

  \label{fig:discImage}
\end{figure}

\subsection{DWI/DTI Matching}
Image matching is symmetric with respect to variations parallel to the
level lines of the images. With diffusion weighted images (DWI) and the variety
of models for the diffusion information (e.g. diffusion tensor imaging DTI CITE,
Gaussian Mixture Fields CITE), first or 
higher-order information can be reintroduced into the matching problem. In
essence, by letting the dissimilarity measure depend on the diffusion information,
the full $\Diff(M)$ symmetry of the image matching problem is 
reduced to an isotropy subgroup of $\Diff(M)$.

The exact form of the of DWI matching problem depends on the diffusion model and
how $\Diff(M)$ acts on the diffusion image. In \cite{yan_cao_large_2005}, the
diffusion is represented by the principal direction of the diffusion tensor, and
the data objects to be match are thus vector fields. The
action by elements of $\Diff(M)$ is defined by
\begin{equation*}
  \varphi\cdot I(x)
  =
  \frac{\|I\circ\varphi^{-1}\|}{\|D\varphi I\circ\varphi^{-1}\|}D\varphi
  I\circ\varphi^{-1}
  \ .
\end{equation*}
The action rotates the diffusion
vector by the Jacobian of the warp keeping its length fixed. Similar models can
be applied to DTI with the Preservation of Principle Direction scheme (PPD, 
\cite{alexander_strategies_1999,alexander_spatial_2001}) and to GMF based models
\cite{cheng_non-rigid_2009}.
 The dependency on the Jacobian matrix implies that
a reduced model must carry first order information in a similar fashion to the
1-jet space $Q^{(1)}$, however, any irrotational part of the Jacobian can be
removed by symmetry. The full effect of this has yet to be explored.

%\todo[inline]{HOJ:Upon looking at the article by Yan Cao, 
%I thinke they are transformtion elements of the space $C^k(M) \times\frac{ \mathfrak{X}(M) }{C^k(M,\R^+)}$
%where $C^k(M)$ is the space of $C^k$ functions and $\mathfrak{X}(M) / C^k(M,\R^+)$ is vector-fields
%modulo lengths (i.e. a director fields).  Just a guess at this time.
%They give an expllicit formula, so I will check this during my flight.}

\subsection{Fluid Mechanics}
Incidentally, the equation of motion
\begin{align*}
  \partial_t m + \pounds_u [m] = 0 \\
  u = K * m
\end{align*}
is an eccentric way of writing Euler's equation
for an invicid incompressible fluid if we assume $u$
is initially in the space of divergence free vector-fields
and $K$ is a Dirac-delta distribution (which impies $m = u$.)
This fact was exploited in \cite{MumfordMichor2013}
to create a sequece of regularized models to Euler's equations
by considering a sequence of Kernels which converge to a Dirac-delta 
distribution.
Moreover, if one replaces $\Diff(M)$ by the subgroup of volume
preserving diffeomorphisms $\Diff_{\rm vol}(M)$,
then (formally) one can produce incompressible particle methods
using the same reduction arguments presented here.
In fact, jet-particles were independently discoverd in this
context as a means of simulating fluids in \cite{jacobs_coupling_2013}.
It is notable that \cite{jacobs_coupling_2013} is a mechanics paper,
and the particle methods which were produced were approached from the perspective
of reduction by symmetry.

In \cite{cotter_jetlet_2014} one of the Kernel parameters in \cite{MumfordMichor2013}
which controls the compressibility of the $u$
was taken to the incompressible limit.
This allowed a realization of the particle methods described in \cite{jacobs_coupling_2013}.
The constructions of \cite{cotter_jetlet_2014} is the same as presented in this survey article,
but with $\Diff(M)$ replaced by the group of 
volume preserving diffeomorphisms of $\mathbb{R}^d$, denoted $\Diff_{\rm vol}( \mathbb{R}^d)$.

\begin{figure}[h]
  \begin{center}
  	\includegraphics[width = 0.2\textwidth]{./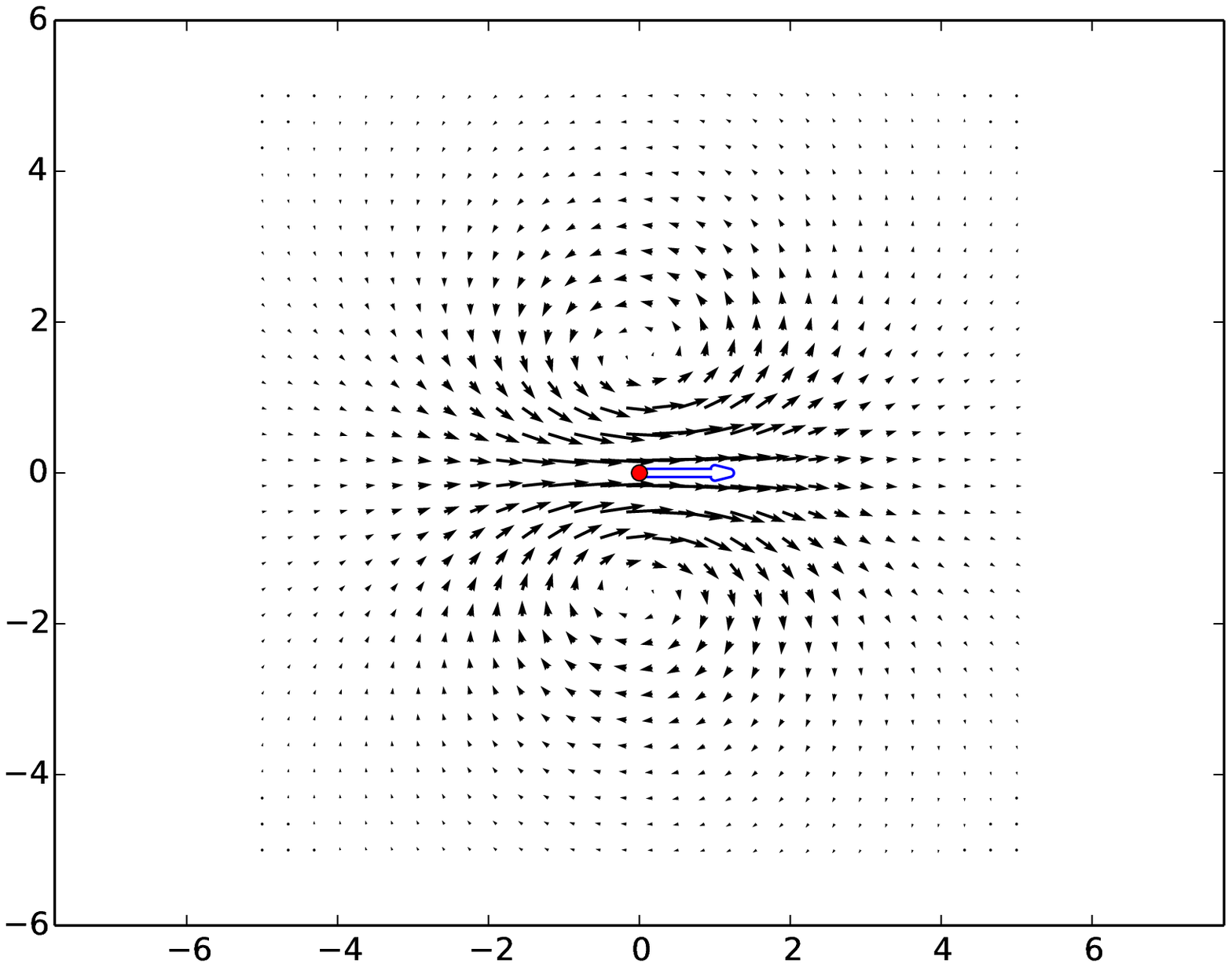}
  	\includegraphics[width = 0.2\textwidth]{./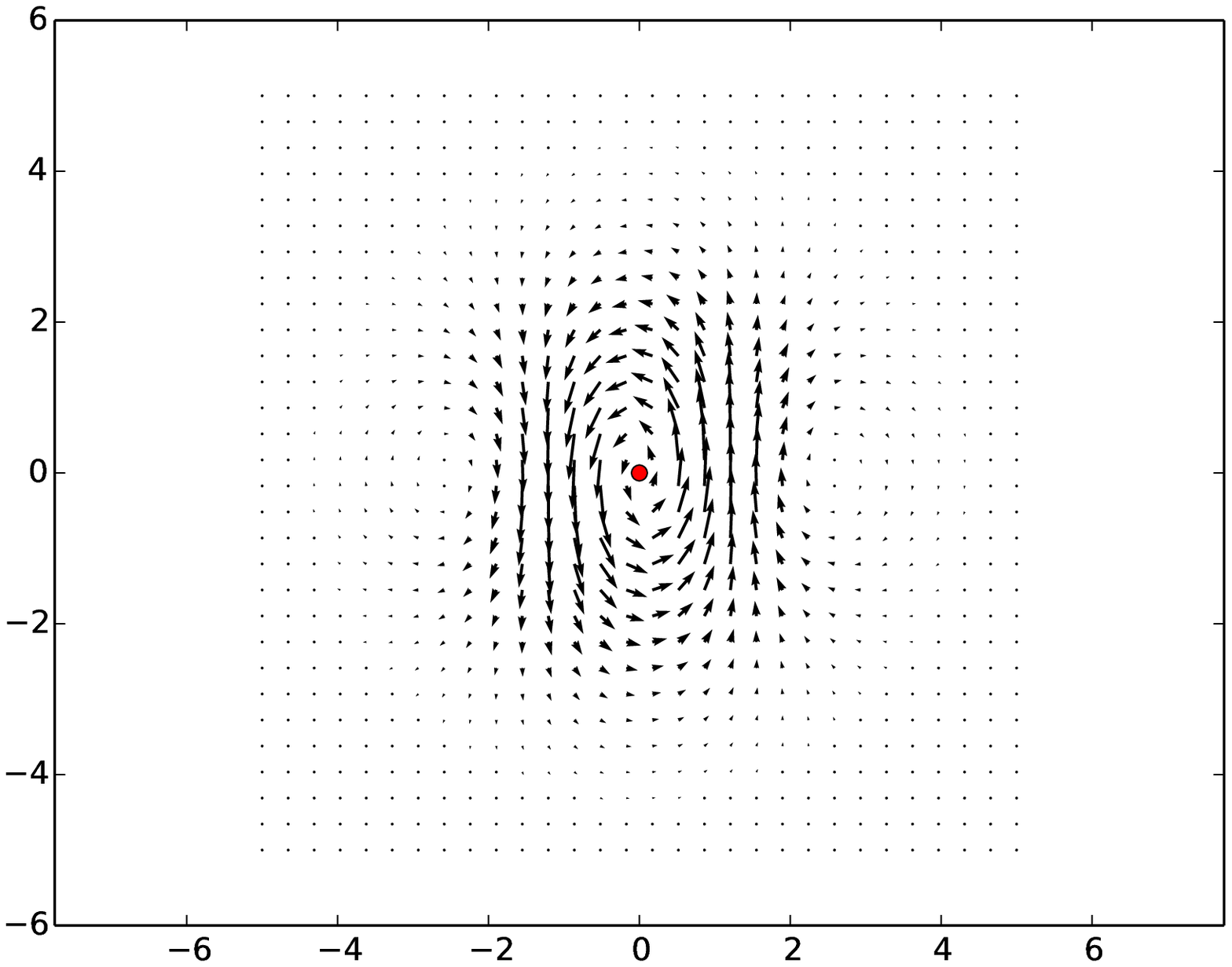}
  	\includegraphics[width = 0.2\textwidth]{./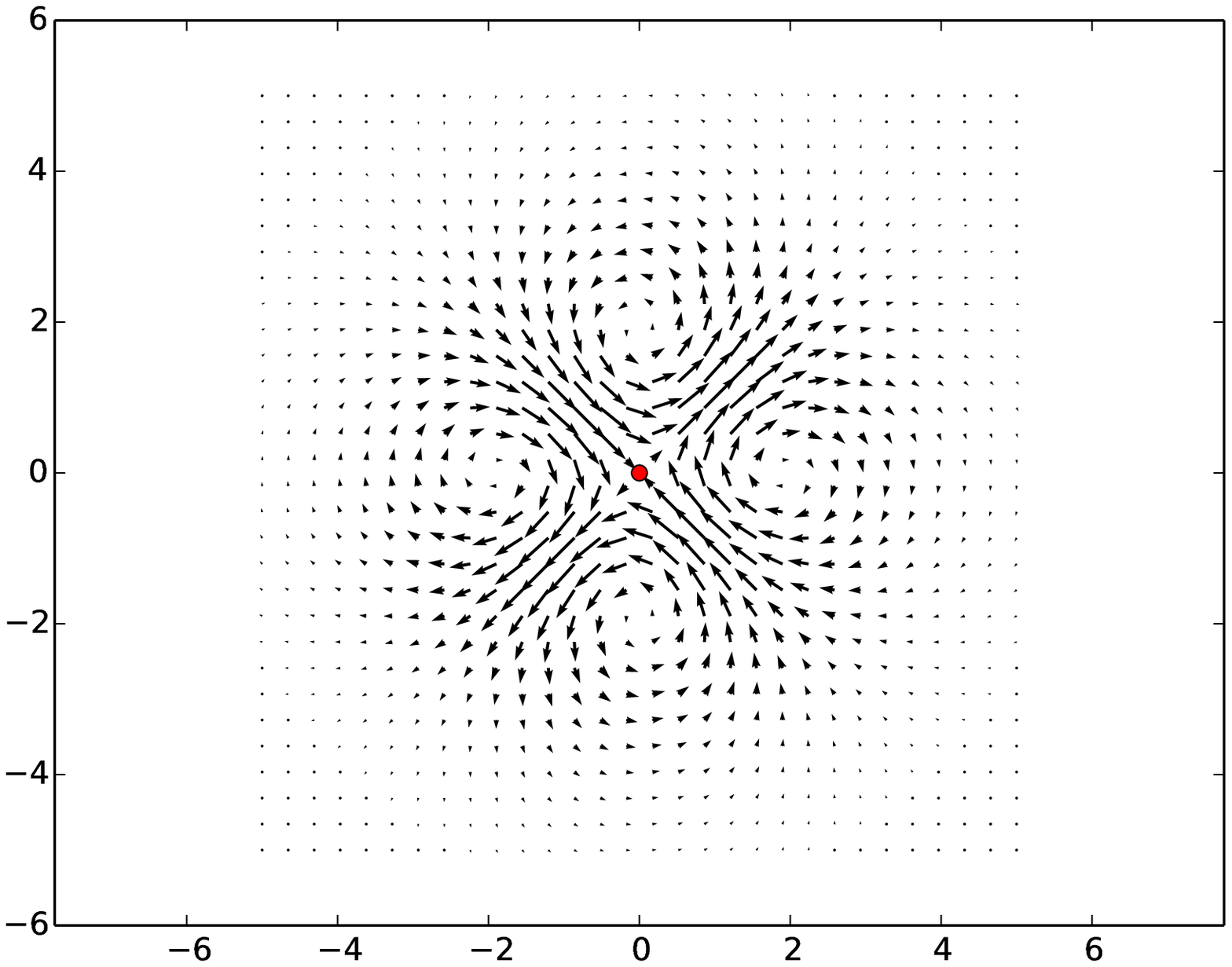}
  	\includegraphics[width = 0.2\textwidth]{./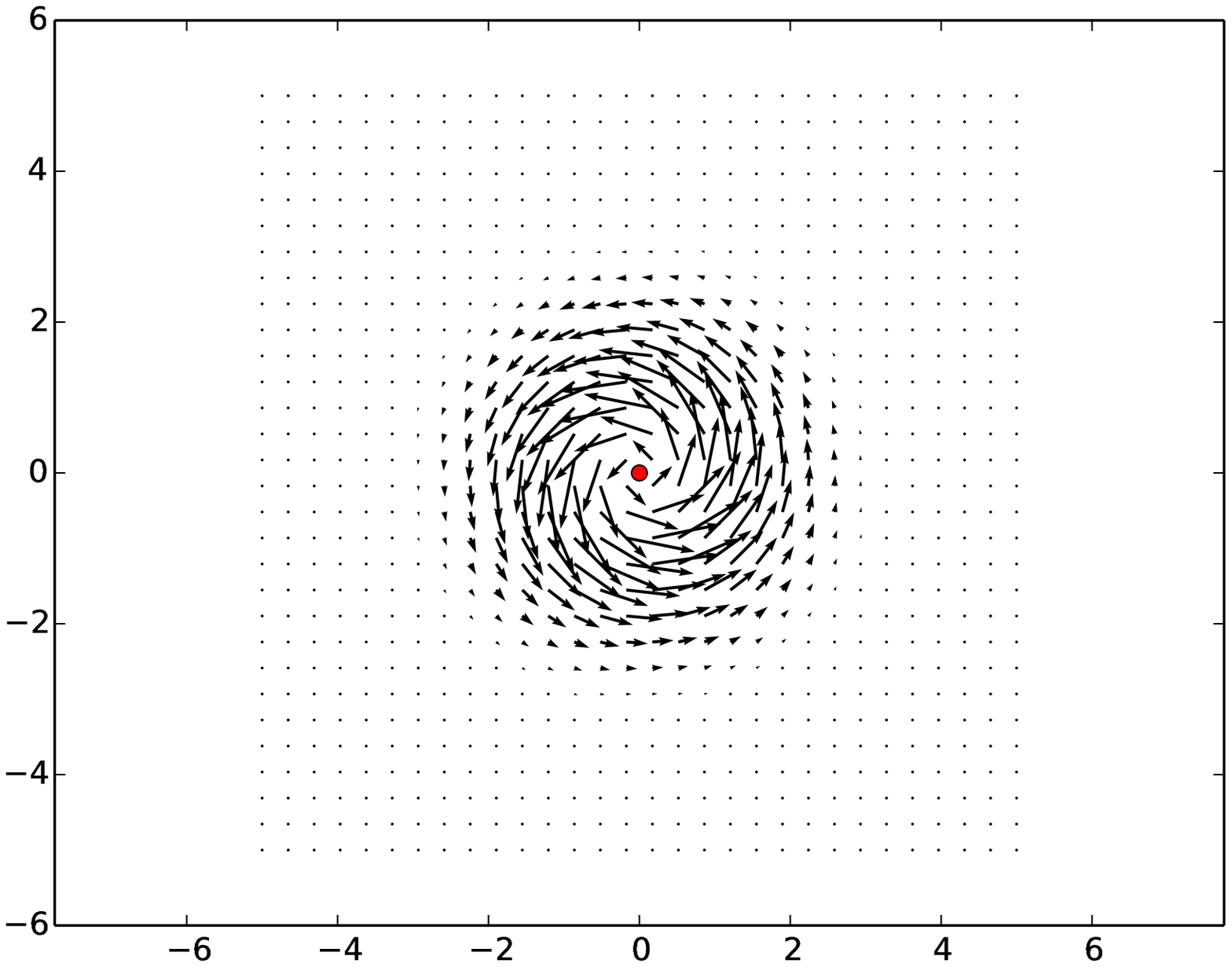}
  \end{center}
  \caption{Velocity fields induced by first order incompressible jet particles
  taken from \cite{cotter_jetlet_2014}
    }
  \label{fig:fluid}
\end{figure}

%\section{Numerical Simulations}
%matching landmarks\\
%images\\
%jets\\
%flow of codim one surfaces

\section{Discussion and Conclusions}
The information available for solving the registration problem is in practice
not sufficient for uniquely encoding the deformation between the objects to be
registered. Symmetry thus arises in both particle relabeling symmetry
that gives the Eulerian formulation of the equations of motion
and in symmetry groups for specific dissimilarity measures.

For landmark matching, reduction by symmetry reduces the infinite dimensional
registration problem to a finite dimensional problem on the reduced landmark space $Q$. For
matching curves and surfaces, symmetry implies that the momentum stays
concentrated at the curve and surfaces allowing a reduction by the isotropy
groups of warps that leave the objects fixed. In image matching, symmetry allows
reduction by the group of warps that do not change the level sets of the image.
Jet particles have smaller symmetry groups and hence larger reduced
spaces $Q^{(1)}$ and $Q^{(2)}$ that encode locally affine and second order
information.

Reduction by symmetry allow these cases to be handled in one theoretical
framework. We have surveyed the mathematical construction behind the reduction
approach and its relation to the above mentioned examples. As data complexity
rises both in term of resolution an structure, symmetry will continue to be an
important tool for removing redundant information and achieving compact 
data representations.

%%%%%%%%%%%%%%%%%%%%%%%%%%%%%%%%%%%%%%%%%%%
%
\section{Acknowledgments}
HOJ would like to thank Darryl Holm for providing a bridge from geometric mechanics
into the wonderful world of image registration algorithms.
HOJ is supported by the European Research
Council Advanced Grant 267382 FCCA.
SS is supported by the Danish Council for Independent Research with
the project ``Image Based Quantification of Anatomical Change''.

\bibliographystyle{amsalpha}

\newcommand{\etalchar}[1]{$^{#1}$}
\providecommand{\bysame}{\leavevmode\hbox to3em{\hrulefill}\thinspace}
\providecommand{\MR}{\relax\ifhmode\unskip\space\fi MR }
% \MRhref is called by the amsart/book/proc definition of \MR.
\providecommand{\MRhref}[2]{%
  \href{http://www.ams.org/mathscinet-getitem?mr=#1}{#2}
}
\providecommand{\href}[2]{#2}

\end{document}